\documentclass{llncs}
\usepackage{amsmath,amssymb,amsfonts}
\usepackage{mathtools}
\usepackage[usenames,dvipsnames]{color}
\usepackage{xcolor}
\usepackage{xspace}
\usepackage{microtype}
\usepackage{todonotes}
\usepackage{colonequals}
 \usepackage{booktabs}
\usepackage{cite}

\usepackage{multirow}
\usepackage{multicol}

\usepackage{paralist}

\usepackage{subfigure}
\usepackage{url}
\usepackage{appendix}
\usepackage{graphicx}
\usepackage{algorithm}
\usepackage{listings}
\usepackage{nicefrac}
\usepackage{tikz} 
\usepackage{pgfplots}
%\usepgfplotslibrary{external}
%\pgfplotsset{compat=1.8}
%\tikzexternalize[prefix=TikzPictures/]
%\usepackage{marvosym}
%\usepackage{wrapfig}
\usepackage{mdframed}

\lstset{
	basicstyle=\ttfamily,
    keywordstyle=\bfseries,
    showstringspaces=false,
    numbers=left,
    numberstyle=\tiny,
    morekeywords={}
}

\usetikzlibrary{arrows,decorations.pathmorphing,positioning,fit,trees,shapes,shadows,automata,calc} 
%\usetikzlibrary{patterns,arrows,arrows.meta,calc,shapes,shadows,decorations.pathmorphing,decorations.pathreplacing,automata,shapes.multipart,positioning,shapes.geometric,fit,circuits,trees,shapes.gates.logic.US,fit,automata,snakes,shapes.geometric}

\pagestyle{plain}

\tikzset{outline/.style args={#1}{%
  draw=#1,thick,fill=#1!50},initial text={}}

\tikzset{
  dot hidden/.style={},
  line hidden/.style={},
  dice hidden/.style={},
  dot color/.style={dot hidden/.append style={color=#1}},
  dot color/.default=black,
  line color/.style={line hidden/.append style={color=#1}},
  line color/.default=black,
  dice color/.style={dice hidden/.append style={color=#1,fill}},
  dice color/.default=white
}\def\dotsize{0.1}
\newcommand{\drawdie}[2][]{%
\begin{tikzpicture}[x=1em,y=1em,#1]
  \draw 	[thick, rounded corners=0.5,line hidden,dice hidden] (0,0) rectangle (1,1);
  \ifodd#2
    \fill[dot hidden] (0.5,0.5) circle (\dotsize);
  \fi
  \ifnum#2>1
  \fill[dot hidden] (0.25,0.25) circle (\dotsize);
  \fill[dot hidden] (0.75,0.75) circle (\dotsize);
  \ifnum#2>3
    \fill[dot hidden] (0.25,0.75) circle (\dotsize);
    \fill[dot hidden] (0.75,0.25) circle (\dotsize);
    \ifnum#2>5
      \fill[dot hidden] (0.75,0.5) circle (\dotsize);
      \fill[dot hidden] (0.25,0.5) circle (\dotsize);
    \fi
  \fi
\fi
\end{tikzpicture}
}

\usepackage{etoolbox}
%\usepackage{setspace}
%\usepackage{a4wide}
%\doublespacing

\newtoggle{TR}
%\togglefalse{TR}
\toggletrue{TR}
\urlstyle{tt}
%\pdfoutput=1

\title{Synthesis in pMDPs: A Tale of 1001 Parameters}
\author{Murat Cubuktepe\inst{1}, Nils Jansen\inst{2}, Sebastian Junges\inst{3},\\ Joost-Pieter Katoen\inst{3}, Ufuk Topcu\inst{1}}
\institute{The University of Texas at Austin, Austin, TX, USA\thanks{Supported by the grants ONR
N000141613165, NASA NNX17AD04G and AFRL FA8650-15-C-2546} \and Radboud University, Nijmegen, The Netherlands  \and RWTH Aachen University, Aachen, Germany\thanks{Supported by the CDZ project CAP (GZ 1023), and the DFG RTG 2236 ``UnRAVeL''.}
}

%% Commands requiring trailing spaces
\newcommand{\ie}{i.e.\@\xspace}

\newcommand{\prophesy}{\textrm{PROPhESY}\xspace}
\newcommand{\prism}{\textrm{PRISM}\xspace}
\newcommand{\storm}{\textrm{Storm}\xspace}
\newcommand{\param}{\textrm{PARAM}\xspace}
\newcommand{\tool}[1]{\textrm{#1}\xspace}

\newcommand{\TO}{TO}
\newcommand{\MO}{MO}
\newcommand{\highlight}[1]{\textcolor{blue!50!black}{\textbf{#1}}}

\newcommand{\epsgraph}{\varepsilon_\text{graph}}

%%new commands
\newcommand{\dtmc}{\mathcal{D}}

\newcommand{\p}{\ensuremath{\mathbb{P}}}
\newcommand{\pr}{\ensuremath{\mathrm{Pr}}}

\newcommand{\reachProp}[2]{\ensuremath{\p_{\leq #1}(\finally #2)}}
\newcommand{\reachProplT}{\ensuremath{\reachProp{\lambda}{T}}}

\newcommand{\reachPropSymbol}{\varphi_r}
\newcommand{\ereachPropSymbol}{\varphi_c}

\newcommand{\expRewProp}[2]{\ensuremath{\EV_{\leq #1}(\finally #2)}}
\newcommand{\expRewPropkT}{\ensuremath{\expRewProp{\kappa}{T}}}
\newcommand{\rewFunction}{\ensuremath{{c}}}

\newcommand{\finally}{\lozenge}

%% Comments

%\newcommand{\sj}[1]{\marginremark{black}{orange}{\scriptsize{[SJ]~ #1}}}

%\newcommand{\nj}[1]{}

%\newcommand{\lmc}[1]{\todo[inline,color=green!50]{\color{blue}MC: #1}\color{black}}
%
%\newcommand{\hp}[1]{\todo[inline,color=blue!40]{\color{black}hp: #1}\color{black}}
%\newcommand{\ip}[1]{\todo[inline,color=orange]{\color{white}IP: #1}\color{black}}

%% Sets
\newcommand{\R}{\mathbb{R}}

    % Real numbers
    %Rational numbers
    %Integer numbers

\newcommand{\Ireal}{[0,\, 1]\subseteq\mathbb{R}}  % real interval [0,1]
  % rational interval [0,1]
        % Expectations
        % Expectations
              % Cpo of unbounded expectations
        % Cpo of bounded expectations
\newcommand{\EV}{\ensuremath{\mathbb{E}}}

\newcommand{\Distr}{\mathit{Distr}}

\newcommand{\distDom}{X}

\newcommand{\distFunc}{\mu}
\newcommand{\distDomElem}{x}

%% Constants

%% Math operators
               % Dual 
                              % Equality used for definitions
            % Least fixed point
            % Greatest fixed point

    % Constant function

%% Linear temporal logic

%% Program Semantics

%\newcommand{\qcwllp}{\ensuremath{\mathsf{\ushortw{cw\textnormal{(}l\textnormal{)}}p}}\xspace}

    % Lifting of predicates into expectations
%\newcommand{\ToExp}[1]{\ensuremath{\left[ #1 \right]}\xspace}    % Lifting of predicat

% \newcommand{\Inv}{\Theta}

%% pGCL LANGUAGE

%%% Set of variables,states,expressions,etc  
        % Set of programs
        % Set of program variables 
\newcommand{\Paramvar}{\ensuremath{{V}}\xspace}        % Set of program variables 

      % Set of arithmetical expressions   
      % Set of boolean expressions

 % shorthand to talk about loops

%%% Language constructs

%% Miscellania 
    % Program transformation
                          % Function space constructor

%% Operational Semantics
\newcommand{\sinit}{s_{\mathit{I}}} % initial state of DTMC/MDP
\newcommand{\mdp}{\mathcal{M}}

\newcommand{\pMdpInit}[1][]{\ensuremath{\mdp{#1}=(S{#1},\,\sinit{#1},\Act,\Paramvar,\probmdp{#1})}}
\newcommand{\probmdp}{\mathcal{P}}

\newcommand{\sched}{\ensuremath{\sigma}}
\newcommand{\Sched}{\ensuremath{\mathit{Str}}}

\newcommand{\Act}{\ensuremath{\mathit{Act}}}
\newcommand{\ActS}{\ensuremath{\mathit{A}}}

\newcommand{\act}{\ensuremath{\alpha}}
\newcommand{\pmdp}{\ensuremath{\mathcal{P}}}

%\newcommand{\path}{\pi}

%\newcommand{\expandable}{\ensuremath{\langle \mathpzc{expandable}\rangle}\xspace}

%temporary mdp stuff

%% Math-Fonts
\DeclareMathAlphabet{\mathpzc}{OT1}{pzc}{m}{it}
\def\presuper#1#2%
  {\mathop{}%
   \mathopen{\vphantom{#2}}^{#1}%
   \kern-\scriptspace%
   #2}

\begin{document}

\maketitle

\begin{abstract}
This paper considers parametric Markov decision processes (pMDPs) whose transitions are equipped with affine functions over a finite set of parameters.
The synthesis problem is to find a parameter valuation such that the instantiated pMDP satisfies a (temporal logic) specification under all strategies. 
We show that this problem can be formulated as a quadratically-constrained quadratic program (QCQP) and is non-convex in general.
To deal with the NP-hardness of such problems, we exploit a convex-concave procedure (CCP) to iteratively obtain local optima.
An appropriate interplay between CCP solvers and probabilistic model checkers creates a procedure --- realized in the tool PROPheSY --- that solves the synthesis problem for models with thousands of parameters. 
\end{abstract}

\section{Introduction}
\label{sec:introduction}

\emph{The parameter synthesis problem.}
Probabilistic model checking concerns the automatic verification of models such as Markov decision processes (MDPs).
% and Markov Chains (MCs).
%It focuses primarily on determining reachability probabilities and expected costs specifications.
Unremitting improvements in algorithms and efficient tool implementations~\cite{KNP11,DBLP:conf/cav/DehnertJK017,iscasmc} have opened up a wide variety of applications, most notably in dependability, security, and performance analysis as well as systems biology.
%A major practical obstacle is that all probabilities in the Markov model need to be known a priori.
However, at early development stages, certain system quantities such as fault or reaction rates are often not fully known. 
This lack of information gives rise to parametric models where transitions are functions over real-valued parameters~\cite{Daw04,lanotte,param_sttt}, forming 
%Parametric models can be viewed as 
symbolic descriptions of (uncountable) families of concrete MDPs. 
The \emph{parameter synthesis problem} is: Given a finite-state parametric MDP, find a parameter instantiation such that the induced concrete model satisfies a given specification.
An inherent problem is \emph{model repair}, where probabilities are changed (``repaired'') with respect to parameters such that a model satisfies a specification~\cite{bartocci2011model}.
Concrete applications include adaptive software systems~\cite{calinescu-et-al-cacm-2012}, sensitivity analysis~\cite{su-et-al-icse-2016-qosevaluation}, and optimizing randomized distributed algorithms~\cite{DBLP:conf/srds/AflakiVBKS17}.

%reachability (or, expected cost) objective.
%Due to dependencies between different parameters, a (restricted) problem is already
%Due to possible dependencies between the different parameters, this problem is 
%hard; even in very simple settings 
%of an acyclic parametric MC (pMC) with an unnested reachability objective, synthesis is 
%NP-hard~\cite{DBLP:journals/corr/abs-1709-02093}.

\emph{State-of-the-art.}
First approaches to parameter synthesis compute a rational function over the parameters to symbolically express reachability probabilities~\cite{Daw04,param_sttt,DBLP:journals/corr/abs-1804-01872}.
% in a closed form
Equivalently,\cite{DBLP:journals/tse/FilieriTG16,DBLP:journals/corr/abs-1709-02093} employ Gaussian elimination for matrices over the field of rational functions. 
Solving the (potentially very large, high-degree) functions is naturally a SAT-modulo theories (SMT) problem over non-linear arithmetic, or
 a nonlinear program (NLP)~\cite{bartocci2011model,DBLP:conf/tacas/Cubuktepe0JKPPT17}.
However, solving such SMT problems is \emph{exponential in the degree of functions and the number of variables}~\cite{DBLP:journals/corr/abs-1709-02093}, and solving NLPs is \emph{NP-hard in general}~\cite{Linderoth2005}.
%
%
%
% considers solving partially observable MDPs (POMDPs) under finite state controllers by employing sequential quadratic programming (SQP)~\cite{amato2006solving} and the reference is relevant to this work, because POMDPs with finite state controllers can be solved as pMCs~\cite{DBLP:journals/corr/abs-1710-10294}. However, utilizing SQP requires finding an initial feasible point. In our setting, finding an initial feasible solution may be hard because of the specification constraints. 
%
%Such encodings can also be fed to SMT solvers like \tool{Z3}~\cite{demoura_nlsat}. 
%The approximate parameter-lifting technique~\cite{quatmann-et-al-atva-2016} turns parameter synthesis into model-checking queries.
Specific approaches to model repair rely on NLP~\cite{bartocci2011model} or particle-swarm optimization (PSO)~\cite{chen2013model}. 

Finally, parameter synthesis is equivalent to computing finite-memory strategies for partially observable MDPs (POMDPs)~\cite{DBLP:journals/corr/abs-1710-10294}.
Such strategies may be obtained, for instance, by employing sequential quadratic programming (SQP)~\cite{amato2006solving}.
Exploiting this approach is not practical, though, because SQP for our setting already requires a (feasible) solution satisfying the given specification.
%, or relaxation techniques to render the problem \emph{convex}~\cite{DBLP:conf/tacas/Cubuktepe0JKPPT17}.
%
Overall, efficient implementations % of the underlying procedure have been realized 
 in tools like \tool{PARAM}~\cite{param_sttt}, \tool{PRISM}~\cite{KNP11}, and \prophesy~\cite{dehnert-et-al-cav-2015} can handle thousands of states but only a \emph{handful of parameters}.

%model repair

%\emph{State-of-the art in parameter synthesis.}

\emph{Our approach.}
We overcome the restriction to few parameters by employing convex optimization~\cite{boyd_convex_optimization}.
This direction is not new; \cite{DBLP:conf/tacas/Cubuktepe0JKPPT17} describes a convexification of the NLP into a geometric program~\cite{boyd2007tutorial}, which can still only handle up to about ten parameters.
%, it often suffers from numerical problems.
%As this number of parameters is still not satisfactory, we take a different approach.\nj{enough bashing on GP like this?}
We take a different approach. 
First, we transform the NLP formulation~\cite{bartocci2011model,DBLP:conf/tacas/Cubuktepe0JKPPT17} into a \emph{quadratically-constrained quadratic program} (QCQP). 
%In such an optimization problem, both the objective function and the constraints are quadratic functions in the variables.\nj{can't that sentence go away? what does it give in the intro?}
As such an optimization problem is nonconvex in general, we cannot resort to polynomial-time algorithms for convex QCQPs~\cite{alizadeh2003second}.
% \tool{MOSEK}~\cite{mosek} or \tool{Gurobi}~\cite{gurobi}.
Instead, to solve our NP-hard problem, 
%we use a procedure to obtain local optima which is steered using an off-the-shelf probabilistic model checker.
we massage the QCQP formulation into a \emph{difference-of-convex} (DC) problem.
%All constraints and the objective of a DCP are a difference of two convex functions.
The convex-concave procedure (CCP)~\cite{lipp2016variations} yields local optima of a DC problem by a convexification towards a convex quadratic program, which is 
%CCP in essence linearizes the concave function around a feasibility region and iteratively refines this region up to local optima.
%The resulting convex quadratic problem is 
amenable for state-of-the-art solvers such as \tool{Gurobi}~\cite{gurobi}.

Yet, blackbox CCP solvers~\cite{park2017general,shen2016disciplined} suffer from severe numerical issues and can only solve very small problems.
We integrate the procedure with a probabilistic model checker, creating a method that --- realized in the open-source tool PROPheSY~\cite{dehnert-et-al-cav-2015} ---
yields (a) an improvement of multiple orders of magnitude compared to just using CCP as a black box and (b) ensures the correctness of the solution.
In particular, we use probabilistic model checking to:
\begin{compactitem}
\item rule out feasible solutions that may be spurious due to numerical errors,
\item check if intermediate solutions are already feasible for earlier termination,
\item compute concrete probabilities from intermediate parameter instantiations to avoid potential numerical instabilities.
%state variable valuations for the $k{+}1$-th CCP iteration by using he parameter values from the $k$-th CCP iteration; this avoids potential numerical instabilities caused by \tool{Gurobi}.
\end{compactitem}
An extensive empirical evaluation on a large range of benchmarks shows that our approach can solve the parameter synthesis problem for models with large state spaces and up to thousands of parameters, and is superior to all existing parameter synthesis tools~\cite{PARAM10,KNP11,dehnert-et-al-cav-2015}, geometric programming~\cite{DBLP:conf/tacas/Cubuktepe0JKPPT17}, and an efficient re-implementation of PSO~\cite{chen2013model} that we create to deliver a better comparison. 
Contrary to the geometric programming approach in \cite{DBLP:conf/tacas/Cubuktepe0JKPPT17}, we compute solutions that hold for all possible (adversarial) schedulers for parametric MDPs.
%adversary schedulers for the parametric MDPs, as done in classical verification. 
%\mc{is ref necessary here?} 
Traditionally, model checking delivers results for such adversarial schedulers~\cite{BK08}, which are for instance useful when the nondeterminism is not controllable and induced by the environment, which is the case in the example below.

\emph{An illustrative example.}
%To the best of our knowledge, we present the first parameter-synthesis approach handling that many parameters.
Consider the Carrier Sense Multiple Access/Collision Detection (CSMA/CD) protocol in Ethernet networks, 
%(international standard IEEE 802.3)
which was subject to probabilistic model checking~\cite{DBLP:journals/entcs/DuflotFHLMMPP05}.
When two stations simultaneously attempt sending a packet (giving rise to a collision), a so-called randomized exponential back-off mechanism is used to avoid the collision.
Until the $k$-th attempt, a delay out of $2^k$ possibilities is randomly drawn from a uniform distribution.
%CSMA/CD has been subject to probabilistic model checking~\cite{DBLP:journals/entcs/DuflotFHLMMPP05}.
An interesting question is if a uniform distribution is optimal, where optimality refers to the minimal expected time until all packets have been sent.
A bias for small delays seems beneficial, but raises the collision probability.
Using our novel techniques, within a minute we synthesize a different distribution, which induces less expected time compared to the uniform distribution. The used model has about $10^5$ states and 26 parameters.
We are not aware of any other parameter-synthesis approach being able to generate such a result within reasonable time.

\section{Preliminaries}
%\subsection{Preliminaries}
\label{sec:preliminaries}

A \emph{probability distribution} over a finite or countably infinite set $\distDom$ is a function $\distFunc\colon\distDom\rightarrow\Ireal$ with $\sum_{\distDomElem\in\distDom}\distFunc(\distDomElem)=1$. 
The set of all distributions on $\distDom$ is denoted by $\Distr(\distDom)$.
Let $V=\{x_1,\ldots,x_n\}$ be a finite set of \emph{variables} over the real numbers $\R$. The set of multivariate polynomials over $V$ is $\mathbb{Q}[V]$. An \emph{instantiation} for $V$ is a function $u\colon V \rightarrow \R$.

A function $f\colon\R^n\to\R$ is \emph{affine} if 
%it has a representation 
$f(\vec{x})=a^T\vec{x}+b$ with $a \in \R^n$ and $b \in \R$, and $f\colon\R^n\to\R$ is \emph{quadratic} if $f(\vec{x})=\vec{x}^TP\vec{x}+a^T\vec{x}+b$ with $a,b$ as before and $P \in \R^{n \times n}$.
%\begin{definition}[Positive semidefinite matrix]\label{def:psd}
A symmetric matrix $P \in \R^{n \times n}$ is \emph{positive semidefinite} (PSD) if $\vec{x}^TP\vec{x}\geq 0 \enskip \forall \vec{x} \in \R^n$, or equivalently, if all eigenvalues of $P$ are nonnegative. 
%\end{definition}
%
\begin{definition}[(Affine) pMDP]\label{def:pmdp}
A \emph{parametric Markov decision process (pMDP)} is a tuple $\pMdpInit$ with a finite set $S$ of \emph{states}, an \emph{initial state} $\sinit \in S$, a finite set $\Act$ of \emph{actions}, a finite set $\Paramvar$ of real-valued variables \emph{(parameters)} and a \emph{transition function} $\probmdp \colon S \times \Act \times S \rightarrow \mathbb{Q}[V]$.
A pMDP is \emph{affine} if $\probmdp(s,\act,s')$ is affine for every $s,s'\in S$ and $\act\in\Act$.
\end{definition}
%
% 
%At each state, an action is chosen \emph{nondeterministically} and successor states are determined \emph{probabilistically} according to the transition function.
%
For $s \in S$,  $\ActS(s) = \{\act \in \Act \mid \exists s'\in S.\,\probmdp(s,\,\act,\,s') \neq 0\}$ is the set of \emph{enabled} actions at $s$.
Without loss of generality, we require $\ActS(s) \neq \emptyset$ for $s\in S$.
If $|\ActS(s)| = 1$ for all $s \in S$, $\mdp$ is a \emph{parametric discrete-time Markov chain (pMC)}. We denote the transition function for pMCs by $\pmdp(s,s')$.
MDPs can be equipped with a state--action \emph{cost function} $\rewFunction \colon S \times \Act \rightarrow \R_{\geq 0}$.

%\begin{definition}[MDP]
A pMDP $\mdp$ is a \emph{Markov decision process (MDP)} if the transition function yields \emph{well-defined} probability distributions, \ie, $\probmdp \colon S \times \Act \times S \rightarrow [0,1]$ and $\sum_{s'\in S}\probmdp(s,\act,s') = 1$ for all $s \in S$ and $\act \in \ActS(s)$. 
%\end{definition}
%%
%Analogously, a Markov chain (MC) is a special class of a pMC.
%; a model is \emph{parameter-free} if all  probabilities are constant. 
%%
Applying an \emph{instantiation} $u\colon V \rightarrow \R$ to a pMDP $\mdp$ yields $\mdp[u]$ by replacing each $f\in\mathbb{Q}[V]$ in $\mdp$ by $f[u]$.
%$\mdp[u]$ is the resulting \emph{instantiation} of $\mdp$ at $u$. 
%The application of $u$ is to replace the transition function $f$ by the probability $f[u]$.
An instantiation $u$ is \emph{well-defined} for $\mdp$ if 
%the replacement yields \emph{probability distributions} at all states; 
the resulting model $\mdp[u]$ is an MDP.

To define measures on
%a probability measure and expected cost on 
MDPs, nondeterministic choices are resolved by a so-called \emph{strategy} $\sched\colon S\rightarrow\Act$ with $\sched(s) \in \ActS(s)$.
The set of all strategies over $\mdp$ is $\Sched^\mdp$.
%
%
%\begin{definition}[Strategy]\label{def:strategy}
%	A \emph{strategy} for an MDP $\mdp$ is a function $\sched\colon S\rightarrow\Act$ with $\sigma(s) \in \ActS(s)$.
%%	$\sched\colon S\rightarrow\Distr(\Act)$ such that $\sigma^{s,a} > 0$ implies $\alpha \in \ActS(s)$.
%%	Schedulers using only Dirac distributions are called \emph{deterministic}. 
%	The set of all strategies over $\mdp$ is $\Sched^\mdp$.
%\end{definition}
%
%\begin{remark}[Types of strategies]
For the measures in this paper, memoryless deterministic strategies suffice~\cite{BK08}.
%	\nj{this time a better argument? say something about memory structures?}
%\end{remark}
%Deterministic schedulers are functions of the form $\sched\colon S\rightarrow\Act$ with $\sigma(s) \in \ActS(s)$. 
Applying a strategy to an MDP yields an \emph{induced Markov chain} where all nondeterminism is resolved.
%, as all nondeterminism is resolved. 
%\begin{definition}[Induced MC]\label{def:induced_dtmc}
%	For MDP $\MdpInit$ and strategy $\sched\in\Sched^\mdp$, the \emph{MC induced by $\mdp$ and $\sched$} is  $\mdp^\sched=(S,\sinit,\Act,\pmdp^\sched)$ with
%	\begin{align*}
%		 \pmdp^\sched(s,s')= \pmdp(s,\sched(s),s')\textit{ for all }s,s'\in S.
%	\end{align*} 
%\end{definition}
%Intuitively, the transition probabilities in $\mdp^\sched$ are obtained with respect to the random choices of action of the scheduler. 
%
%\paragraph{Specifications.}
%We consider \emph{reachability} and \emph{expected cost specifications}.

For an MC $\dtmc$, the \emph{reachability specification} $\reachPropSymbol=\reachProplT$ asserts that a set $T \subseteq S$ of \emph{target states}  is reached with probability at most $\lambda\in [0,1]$.
If $\reachPropSymbol$ holds for $\dtmc$, we write $\dtmc\models\reachPropSymbol$.
%\nj{check if we need the prob from other states than the initial}
Accordingly, for an \emph{expected cost specification} $\ereachPropSymbol=\expRewProp{\kappa}{G}$ it holds that $\dtmc\models\ereachPropSymbol$ if and only if the expected cost of reaching a set $G \subseteq S$ is bounded by $\kappa \in \R$.
We use standard measures and definitions as in~\cite[Ch.\ 10]{BK08}.
%%
%
%let $\reachPrs{\dtmc}{s}{T}$ denote the probability of reaching a set of \emph{target states} $T \subseteq S$ from  state $s\in S$; simply $\reachPrT[\dtmc]$ denotes the probability for initial state $\sinit$.%
%For threshold $\lambda\in [0,1]$, the The \emph{reachability specification} $\reachPropSymbol = \reachProplT$ asserts that a target state is to be reached with probability at most $\lambda$ is denoted .
%The property is satisfied by $\dtmc$, written $\dtmc \models \reachPropSymbol$, iff $\reachPrT[\dtmc]\leq\lambda$.
%(Comparisons like $<$, $>$, and $\geq$ are treated in a similar way.)
%
%The cost of a path through MC $\dtmc$ until a set of \emph{goal states} $G\subseteq S$ is the sum of action costs visited along the path. The expected cost of a finite path is the product of its probability and its cost.
%For $\reachPr{\dtmc}{G}  = 1$, the expected cost of reaching $G$ is the sum of expected costs of all paths leading to $G$.
%In case $\reachPrT[\dtmc] < 1$, we set $\expRewT[\dtmc] = \infty$.\sj{As we do not go into technical details, this sentence seems superfluous}
%
%
%If multiple specifications $\varphi_1,\ldots,\varphi_q$ are given, which are either reachability properties or expected cost properties of the aforementioned forms, we write the satisfaction of all specifications $\varphi_1,\ldots,\varphi_q$ for an MC $\dtmc$ as $\dtmc\models\varphi_1\land\ldots\land\varphi_q$. 
%%
An MDP $\mdp$ satisfies a specification $\varphi$, written $\mdp\models\varphi$, if and only if \emph{for all} strategies $\sched\in\Sched^\mdp$ it holds that $\mdp^\sched\models\varphi$. 
%The verification of multiple specifications is also referred to as \emph{multi-objective model checking}~\cite{DBLP:journals/lmcs/EtessamiKVY08,DBLP:conf/atva/ForejtKP12}.
%We are also interested in the so-called scheduler \emph{synthesis problem}, where the aim is to find a scheduler $\sched$ such that the specifications are satisfied (although other schedulers may not satisfy the specifications).

\section{Formal Problem Statement}
\begin{mdframed}[backgroundcolor=gray!30]
\begin{problem}[pMDP synthesis problem]\label{prob:pmdpsyn}
Given a pMDP $\pMdpInit$, and a reachability specification $\reachPropSymbol=\reachProplT$, 
%that is either a reachability or an expected cost specification, 
 compute a well-defined
instantiation $u\colon V \rightarrow \R$ for $\mdp$ such that $\mdp[u]\models\reachPropSymbol$.
\end{problem}
\end{mdframed}
Intuitively, we seek for an instantiation of parameters $u$ that satisfies $\reachPropSymbol$ for all possible strategies for the instantiated MDP.
We show necessary adaptions for an expected cost specification $\ereachPropSymbol=\expRewPropkT$ later.

%\begin{remark}
%	We restrict the problem to affine pMDPs as in Def.~\ref{def:pmdp}. The extension to \emph{multi-affine} pMDPs, where for multivariate functions the variables have degree one, is straightforward and covers most of the benchmarks from the literature.
%\end{remark}\nj{improve remark. put it later? discuss that finding feasible points is already as hard as the problem?}
%\mc{sebastian, can you put a comment on hardness of finding a welldefined solution for general polynomials? i am not sure about the complexity class of checking if $p(\vec{x})>0$ for polynomials.}

For a given instantiation $u$, Problem~\ref{prob:pmdpsyn} boils down to verifying if $\mdp[u]\models\reachPropSymbol$. 
The standard formulation uses a linear program (LP) to minimize the probability $p_{\sinit}$ of reaching the target set $T$ from the initial state $\sinit$, while ensuring 
%via the constraint 
that this probability is realizable under any strategy~\cite[Ch.\ 10]{BK08}.
%The probability $p_{\sinit}$ may then be compared to the threshold $\lambda$.
The straightforward extension of this approach to pMDPs in order to \emph{compute} a suitable instantiation $u$ yields the following nonlinear program (NLP):
%The NLP reads as follows:
		\begin{align}
			\text{minimize } &\quad p_{\sinit}\label{eq:min_mdp}\\
			\text{subject to} &\nonumber \\
%							 &\quad p_{\sinit}\leq \lambda\label{eq:strategyah:lambda}\\
			\forall s\in T.	 &\quad p_s=1\label{eq:targetprob_mdp}\\
			\forall s,s'\in S.\, \forall\act\in\Act.	 &\quad \probmdp(s,\act,s')\geq 0\label{eq:well-defined_probs_mdp}\\
						\forall s\in S.\, \forall\act\in\Act.	 &\quad \sum_{s'\in S}\probmdp(s,\act,s')=1\label{eq:well-defined_probs_mdp1}\\
&\quad  \lambda \geq p_{\sinit}\label{eq:probthreshold_mdp}\\
									\forall s\in S\setminus T.\,\forall \act\in\Act.	&\quad p_s \geq \sum_{s'\in S}	\probmdp(s,\act,s')\cdot p_{s'}\label{eq:probcomputation_mdp}.
		\end{align}
For $s \in S$, the \emph{probability variable} $p_s\in[0,1]$ represents an upper bound of the probability of reaching target set $T\subseteq S$, and the \emph{parameters} in set $V$ enter the NLP as part of the functions from $\mathbb{Q}[V]$ in the transition function $\probmdp$.
%
%\nj{Actually, $p_s$ are upper bounds.}
%The NLP incorporates \emph{probability variables} and the original \emph{pMDP parameters}:
\begin{proposition}
The NLP in \eqref{eq:min_mdp} -- \eqref{eq:probcomputation_mdp} computes the \emph{minimal probability} of reaching $T$ under a \emph{maximizing} strategy.
\end{proposition}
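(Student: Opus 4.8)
The plan is to decouple the two groups of decision variables: the parameter-carrying transition values, constrained by \eqref{eq:well-defined_probs_mdp}--\eqref{eq:well-defined_probs_mdp1}, and the probability variables $p_s$, constrained by \eqref{eq:targetprob_mdp} and \eqref{eq:probcomputation_mdp}. First I would fix an arbitrary instantiation $u$ that is feasible for \eqref{eq:well-defined_probs_mdp}--\eqref{eq:well-defined_probs_mdp1}; these constraints assert exactly that every value $\probmdp(s,\act,s')$ is nonnegative and that the outgoing values at each state--action pair sum to one, so $u$ is well-defined and $\mdp[u]$ is a genuine MDP on which $\max_{\sched}\Pr^{\sched}_{\sinit}(\finally T)$ is defined and, by the preliminaries, attained by a memoryless deterministic strategy.

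For this fixed $u$, I would introduce the Bellman operator $\Phi$ on $[0,1]^S$ with $\Phi(p)_s = 1$ for $s \in T$ and $\Phi(p)_s = \max_{\act\in\ActS(s)} \sum_{s'\in S}\probmdp(s,\act,s')\cdot p_{s'}$ otherwise, where the $\probmdp(s,\act,s')$ are evaluated at $u$. The classical characterization of maximal reachability probabilities \cite[Ch.\ 10]{BK08} states that the vector $x^\ast$ with $x^\ast_s = \max_{\sched}\Pr^{\sched}_{s}(\finally T)$ is the least fixed point of $\Phi$; since $\Phi$ is monotone on the complete lattice $[0,1]^S$, by Knaster--Tarski this least fixed point coincides with the least pre-fixed point, i.e.\ the componentwise smallest $p$ satisfying $p \geq \Phi(p)$. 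The key observation is that \eqref{eq:targetprob_mdp} together with \eqref{eq:probcomputation_mdp} and the range $p_s\in[0,1]$ describe precisely this set of pre-fixed points. Hence $x^\ast$ is the componentwise-least point feasible in the $p$-variables, and minimizing the single coordinate $p_{\sinit}$ yields $x^\ast_{\sinit} = \max_{\sched}\Pr^{\sched}_{\sinit}(\finally T)$.

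It then remains to reintroduce $u$ as a decision variable: ranging the minimization over all $u$ feasible for \eqref{eq:well-defined_probs_mdp}--\eqref{eq:well-defined_probs_mdp1} shows that the optimal value of \eqref{eq:min_mdp} equals $\min_{u}\max_{\sched}\Pr^{\sched}_{\sinit}(\finally T)$ over well-defined $u$, that is, the minimal probability of reaching $T$ under a maximizing strategy; the threshold \eqref{eq:probthreshold_mdp} merely turns this optimum into the feasibility test for $\reachPropSymbol$. The step I expect to be the main obstacle is justifying that minimizing only the coordinate $p_{\sinit}$, rather than $\sum_{s}p_s$, still attains $x^\ast_{\sinit}$ in the presence of cycles and of states from which $T$ is unreachable --- situations where $\Phi$ has many fixed points. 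This is exactly where componentwise-minimality of the least pre-fixed point is needed, together with the explicit lower bound $p_s\geq 0$ that rules out spuriously small values for the zero-probability states; I would make this precise through the monotone value-iteration argument, starting from the vector that is $1$ on $T$ and $0$ elsewhere and iterating $\Phi$, whose iterates converge upward to $x^\ast$, rather than relying on uniqueness of the fixed point.
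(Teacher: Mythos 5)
Your proof is correct, and it reaches the proposition by a genuinely different route than the paper. The paper's justification of the proposition itself is an informal reading of the constraints; the technical substance appears in the appendix (proof of Theorem~\ref{thm:main}), where the key inequality --- that every NLP-feasible assignment satisfies $q_s \le p_s$, with $q_s$ the maximal reachability probability in $\mdp[u]$ --- is proved by a discrete maximum-principle contradiction: setting $x_s = q_s - p_s$, assuming the set $S_<$ of states with $x_s > 0$ is nonempty, and showing that any state attaining the maximal difference $x_{max}$ propagates that difference to all its successors under the maximizing action, so the argmax set cannot reach $T$, contradicting that every state in $S_<$ reaches $T$ with positive probability. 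You instead fix the instantiation $u$, identify the feasible region of the $p$-variables with the set of pre-fixed points of the Bellman operator $\Phi$, and invoke Knaster--Tarski together with the textbook fact that the maximal reachability vector is the least fixed point of $\Phi$, justified by monotone value iteration from the indicator vector of $T$. Both arguments ultimately rest on the same fact --- every feasible $p$ dominates the maximal reachability vector componentwise, and that vector is itself feasible --- but they establish it differently, and each has its advantages: your lattice-theoretic version needs no special treatment of target states or of states that cannot reach $T$ (domination holds everywhere automatically), makes fully explicit why minimizing the single coordinate $p_{\sinit}$ rather than $\sum_s p_s$ is enough, and cleanly separates the roles of the two variable groups (fix $u$, solve for $p$, then range over $u$); the paper's version is more elementary and self-contained, using only graph reachability and finiteness of $S$ rather than fixed-point machinery, at the cost of explicit case distinctions for $T$ and for zero-probability states and a somewhat delicate equality-propagation step. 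Your concluding remark that constraint \eqref{eq:probthreshold_mdp} merely turns the computed optimum into a feasibility test for $\reachPropSymbol$ matches the paper's observation that this constraint is optional here and only becomes essential later.
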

The probability to reach a state in $T$ from $T$ is one~\eqref{eq:targetprob_mdp}.
The constraints~\eqref{eq:well-defined_probs_mdp} and~\eqref{eq:well-defined_probs_mdp1} ensure \emph{well-defined} transition probabilities.
Constraint~\eqref{eq:probthreshold_mdp} is optional but necessary later, and ensures that the probability of reaching $T$ is below the threshold $\lambda$.
For each non-target state $s\in S$ and any action $\act\in\Act$, the probability induced by the \emph{maximizing scheduler} is a lower bound to the probability variables $p_s$~\eqref{eq:probcomputation_mdp}.
To assign probability variables the minimal values with respect to the parameters from $V$, $p_{\sinit}$ is minimized in the objective~\eqref{eq:min_mdp}.
%To compute the \emph{minimal probability}, the probabilities need to be put as upper bound on the right hand side of constraint~\eqref{eq:probcomputation_mdp} while $p_{\sinit}$ is maximized.\nj{do we need the sentence about how to minimize?}

%
%The number of constraints in the NLP in~\eqref{eq:min_mdp} -- \eqref{eq:probcomputation_mdp} is determined by the number of states and actions. 
%The \emph{size of the problem} is thus in $\mathcal{O}(|S|^2\cdot |\Act|)$.
%
%
\begin{remark}[Graph-preserving instantiations]
	In the LP formulation for MDPs, states with probability $0$ to reach $T$ are determined via a preprocessing on the underlying graph, and their probability variables are set to zero, to avoid an underdetermined equation system. 	
	For the same reason, we preserve the underlying graph of the pMDP, as in~\cite{param_sttt,dehnert-et-al-cav-2015}.
	We thus exclude valuations $u$ with $f[u]=0$ for $f\in\probmdp(s,\act,s')$ for all $s,s'\in S$ and $\act\in\Act$.
We replace \eqref{eq:well-defined_probs_mdp} by \begin{align}
 	\forall s,s'\in S.\, \forall\act\in\Act.	 &\quad \probmdp(s,\act,s')\geq \epsgraph.\label{eq:well-defined_eps}
 \end{align}
 where $\epsgraph>0$ is a small constant. 
\end{remark}
%Accordingly, we conclude this section with the following example.
\begin{example}\label{ex:nlp}
Consider the pMC in Fig.~\ref{fig:pmc_reform} with parameter set $V=\{v\}$, initial state $s_0$, and target set $T = \{s_3\}$. Let $\lambda$ be an arbitrary constant.
The NLP in~\eqref{eq:nlp-ex1} -- \eqref{eq:nlp-ex2} minimizes the probability of reaching $s_3$ from the initial state:
\begin{align}
			\text{minimize} & \quad p_{s_0} \label{eq:nlp-ex1} \\
			\text{subject to} &\nonumber\\
			&\quad p_{s_3}=1\\
			&\quad \lambda \geq p_{s_0} \geq v\cdot p_{s_1} \\
			&\quad p_{s_1} \geq (1-v)\cdot p_{s_2}\\
			&\quad  p_{s_2} \geq v\cdot p_{s_3}  \\
			&\quad 1-\epsgraph \geq v\geq \epsgraph. \label{eq:nlp-ex2}
		\end{align}
%The probability variables are $p_{s_i}$ for $i=0,1,2,3$ and $v$ is the parameter variable.
\end{example}
\begin{figure}[t]
	\centering
	\begin{tikzpicture}[scale=1, nodestyle/.style={draw,circle},baseline=(s0)]

    \node [nodestyle,initial] (s0) at (0,0) {$s_0$};
    \node [nodestyle] (s1) [on grid, right=2cm of s0] {$s_1$};
    \node [nodestyle] (s2) [on grid, right=2cm of s1] {$s_2$};
    \node [nodestyle ,accepting] (s3) [on grid, right=2cm of s2] {$s_3$};
    \node [nodestyle, gray] (s4) [on grid, below=1cm of s1] {$s_4$};

    \draw[->] (s0) -- node [auto] {\scriptsize$v$} (s1);
    \draw[->,gray] (s0) -- node [below,pos=0.3, yshift=-0.1cm] {\scriptsize$1-v$} (s4);

    \draw[->] (s1) -- node [auto] {\scriptsize$1-v$} (s2);
    \draw[->,gray] (s1) -- node [auto] {\scriptsize$v$} (s4);
    
    \draw[->] (s2) -- node [auto] {\scriptsize$v$} (s3);
    \draw[->,gray] (s2) -- node [below,pos=0.3, yshift=-0.1cm] {\scriptsize$1-v$} (s4);

    \draw(s3) edge[loop right] node [right] {\scriptsize$1$} (s3);
    \draw(s4) [gray] edge[loop right] node [right] {\scriptsize$1$} (s3);
    
\end{tikzpicture}	
	\caption{A pMC with parameter $v$.}
	\label{fig:pmc_reform}
\end{figure}
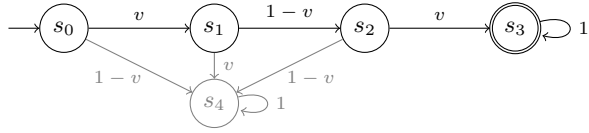

\paragraph{Expected cost specifications.}
The NLP in \eqref{eq:min_mdp} -- \eqref{eq:well-defined_eps} considers reachability probabilities.
If we have instead an expected cost specification $\ereachPropSymbol=\expRewProp{\kappa}{G}$, we replace \eqref{eq:targetprob_mdp}, \eqref{eq:probthreshold_mdp}, and \eqref{eq:probcomputation_mdp} in the NLP by the following constraints:
\begin{align}
	\forall s\in G.	 &\quad p_s=0,\label{eq:targetrew}\\
	\forall s\in S\setminus G.\, \forall\act\in\ActS(s).	&\quad p_s\geq  c(s,\act) + \sum_{s'\in S}	\probmdp (s,\act,s') \cdot p_{s'}
	 \label{eq:rewcomputation}\\
	&\quad \kappa \geq p_{\sinit}\label{eq:strategyah:lambda}.
\end{align}
We have $p_s\in\R$, as these variables represent the expected cost to reach $G$. 
At $G$, the expected cost is set to zero \eqref{eq:targetrew}, and the actual expected cost for other states is a lower bound to $p_s$ \eqref{eq:rewcomputation}.
Finally, %at the initial state 
$p_{\sinit}$ is bounded by the threshold $\kappa$.

\section{QCQP Reformulation of the pMDP Synthesis Problem}\label{sec:qcqp}
%\subsection{What is QCQP?}
For the remainder of this paper, we restrict pMDPs to be affine, see Def.~\ref{def:pmdp}.
For an affine pMDP $\mdp$, the functions in the resulting NLP~\eqref{eq:min_mdp} -- \eqref{eq:well-defined_eps} for pMDP synthesis from the previous section are affine in $V$.
However, the functions in~\eqref{eq:probcomputation_mdp} are \emph{quadratic}, as a result of multiplying affine functions occurring in $\probmdp$ with the probability variables $p_{s'}$.
%As one can reduce each NLP to a quadratically-constrained program, this property holds also for pMDPs with general polynomials. 
%The NLP in \eqref{eq:min_mdp} -- \eqref{eq:probcomputation_mdp} may be \emph{nonconvex}.
%In this section, 
%we investigate and exploit the fact that the NLP is in fact a \emph{quadratically-constrained quadratic program} (QCQP)~\cite{boyd_convex_optimization}.
We rewrite the NLP as a standard form of a \emph{quadratically-constrained quadratic program} (QCQP)~\cite{boyd_convex_optimization}.
Afterwards, we examine this QCQP in detail and show that it is nonconvex.  

In general, a QCQP is an optimization problem with a quadratic objective function and $m$ quadratic constraints, written as  
\begin{align}
 \text{minimize}   & \quad \vec{x}^T P_0 \vec{x} + q_0^T \vec{x}+r_0\label{eq:qcqp_min}\\
 \text{subject to} & \nonumber\\
\forall i \in \{1,\ldots,m\} &\quad   \vec{x}^T P_i \vec{x} + q_i^T \vec{x}+r_i \leq 0,\label{eq:qcqp_constraints}
\end{align}
where $\vec{x}$ is a vector of \emph{variables}, and the coefficients are $P_i \in \R^{n \times n}$, $g_i \in \R^n$, $r_i \in \R$ for $0\leq i\leq m$. 
We assume $P_0,\ldots, P_m$ are symmetric without loss of generality.
Constraints of the form $\vec{x}^T P_i \vec{x} + q_i^T \vec{x}+r_i = 0$ are encoded by
\begin{align*}
	 \vec{x}^T P_i \vec{x} + q_i^T \vec{x}+r_i \leq 0\text{ and }-\vec{x}^T P_i \vec{x} - q_i^T \vec{x} - r_i \leq 0\ .%\label{eq:equalityByInequality}
\end{align*}
\paragraph{Properties of QCQPs.} We discuss properties of all matrices $P_i$ for $0\leq i\leq m$.
If all $P_i=0$, the function $q_i^T \vec{x}+r_i$ is \emph{affine}, and the QCQP is in fact an LP.
If every $P_i$ is PSD, the function $\vec{x}^T P_i \vec{x} + q_i^T \vec{x}+r_i$ is \emph{convex}, and the QCQP is a \emph{convex optimization problem}, that can be solved in polynomial time~\cite{alizadeh2003second}.
%With efficient solvers, e.g., MOSEK~\cite{mosek} or GUROBI~\cite{gurobi}, convex QCQPs can be solved efficiently.
%\nj{ufuk said to remove the SOCP details. I also think they don't give much here.}
If any $P_i$ is not PSD, the resulting QCQP is nonconvex. The problem of finding a feasible solution in a nonconvex QCQP is NP-hard~\cite{burer2012milp}. 

To ease the presentation, we transform the quadratic constraints in the NLP in~\eqref{eq:min_mdp} -- \eqref{eq:well-defined_eps} to the standard QCQP form in~\eqref{eq:qcqp_min} -- \eqref{eq:qcqp_constraints}:
\begin{align}
			\text{minimize} &\quad p_{\sinit}\label{eq:min_qcqp}\\
			\text{subject to} & \quad \nonumber\\
%							 &\quad p_{\sinit}\leq \lambda\label{eq:strategyah:lambda}\\
			\forall s\in T.	 &\quad p_s=1\label{eq:targetprob_qcqp}\\
			\forall s,s'\in S. \forall\act\in\ActS(s).	 &\quad \probmdp(s,\act,s')\geq \epsgraph\label{eq:well-defined_probs_qcqp1}\\
			\forall s\in S.\forall \act\in\ActS(s).	 &\quad \sum_{s'\in S}\probmdp(s,\act,s')=1\label{eq:well-defined_probs_qcqp}\\
					&\quad \lambda \geq  p_{\sinit} \label{eq:probthreshold_qcqp}\\
			\forall s\in S\setminus T.	\forall \act\in \ActS(s). &\quad p_s \geq \vec{x}^{T}P_{s,\act} \vec{x}+q_{s,\act}^T\vec{x}, \label{eq:probcomputation_qcqp}
		\end{align}
where $\vec{x}$ is a vector consisting of the probability variables $p_s$ for all $s\in S$ and the pMDP parameters from $V$, \ie, $\vec{x}$ has $|S|+|V|$ rows. 
Furthermore, $P_{s,\act}\in\R^{(|S|+|V|)\times(|S|+|V|)}$ is a symmetric  matrix, and $q_{s,\act}\in\R^{(|S|+|V|)}$.

\paragraph{Construction of the QCQP.}
We use the matrix $P_{s,\act}$ to capture the \emph{quadratic} part and the vector $q_{s,\act}$ to capture the \emph{affine} part in~\eqref{eq:probcomputation_qcqp}.
More precisely, consider an affine function $\probmdp(s,\act,s')=a\cdot v+b$ with $a,b\in\R$.
The function occurs in the constraint~\eqref{eq:probcomputation_mdp} as part of the function $(a\cdot v+b)\cdot p_{s'}$.
The quadratic part thus occurs as $a\cdot v\cdot p_{s'}$ and the affine part as $b \cdot p_{s'}$.

We first consider 
%$P_{s,\act}$.
%In particular, 
the product $\vec{x}^TP_{s,\act}\vec{x}$, which denotes the sum over all products of entries in $\vec{x}$.
Thus, in $P_{s,\act}$, each row or column corresponds either to a probability variable $p_s$ for a state $s\in S$ or to a parameter $v \in V$. 
%Each cell corresponds to the respective product. 
In fact, the cells indexed $(v, p_{s'})$ and $(p_{s'}, v)$ correspond to the product of these variables.
%As the matrix $P_{s,\act}$ needs to be symmetric, there are two entries in $P_{s,\act}$ that represent the product of a $p_s$ and $v$.
These two entries are summed in $\vec{x}^TP_{s,\act}\vec{x}$.
In $P_{s,\act}$, the sum is reflected by two entries $\nicefrac{a}{2}$ in the cells $(v, p_{s'})$ and $(p_{s'}, v)$.
Then $P_{s,\act}$ is a symmetric matrix, as required.
%For example, if $\probmdp(s,\act,s')=4v+0.5$, we add $2$ to $P_{s,a}$ at the two entries for $v \cdot p_s'$.
Similarly, we construct $q_{s,\act}$; the entry corresponding to $p_{s'}$ is set to $b$. 

%The functions in the constraint~\eqref{eq:probcomputation_mdp} are \emph{quadratic}.
We do not modify the affine functions in~\eqref{eq:targetprob_qcqp} -- \eqref{eq:probthreshold_qcqp} for the QCQP form.
% are affine, therefore we do not modify them for the QCQP form. 
%To finalize the transformation, we add a constraint to compute a well-defined instantiation to satisfy the specification $\varphi_r$ in Problem 1 as the constraint~\eqref{eq:probthreshold_qcqp}.
%Consider the following example.
\begin{example}\label{ex:qcqp_transformation}
Recall Example~\ref{ex:nlp}. 
We reformulate the NLP in~\eqref{eq:nlp-ex1} -- \eqref{eq:nlp-ex2} as a QCQP in the form of~\eqref{eq:min_qcqp} -- \eqref{eq:probcomputation_qcqp} using the same variables.

\begin{align*}
			\text{minimize} & \quad p_{s_0} \\
			\text{subject to}& \quad \nonumber\\
			&\quad p_{s_3}=1\\
			&\quad  \lambda \geq  p_{s_0} \geq \begin{bmatrix}
    v \\
    p_{s_1}
\end{bmatrix}^T P_{s_0} \begin{bmatrix}
    v \\
    p_{s_1}
\end{bmatrix}= \begin{bmatrix}
    v \\
    p_{s_1}
\end{bmatrix}^T \begin{bmatrix}
    0       & 0.5 \\
    0.5       & 0
\end{bmatrix}  \begin{bmatrix}
    v \\
    p_{s_1}
\end{bmatrix}\\
& \quad p_{s_1} \geq   \begin{bmatrix}
    v \\
    p_{s_2}
\end{bmatrix}^T P_{s_1} \begin{bmatrix}
    v \\
    p_{s_2}
\end{bmatrix}= \begin{bmatrix}
    v \\
    p_{s_2}
\end{bmatrix}^T \begin{bmatrix}
    0       & -0.5 \\
    -0.5       & 0
\end{bmatrix}  \begin{bmatrix}
    v \\
    p_{s_2}
\end{bmatrix} +p_{s_2}\\
&\quad   p_{s_2} \geq 
\begin{bmatrix}
    v \\
    p_{s_3}
\end{bmatrix}^T P_{s_2} \begin{bmatrix}
    v \\
    p_{s_3}
\end{bmatrix}= \begin{bmatrix}
    v \\
    p_{s_3}
\end{bmatrix}^T \begin{bmatrix}
    0       & 0.5 \\
    0.5       & 0
\end{bmatrix}  \begin{bmatrix}
    v \\
    p_{s_3}
\end{bmatrix} \\
			&\quad 1-\epsgraph \geq v \geq \epsgraph.
		\end{align*}
\end{example}
%
%Note that we omit some of the zero terms in the matrices and vectors that are equal to zero in the constraints to have a compact representation. \nj{why now ``terms'' again? also, these sentences are not really understandable.}
%We also remove the terms that has $p_{s_4}$ from the optimization problem as the probability to reach target set $T=\{s_3\}$ from state $s_4$ is zero.

%
\begin{theorem}
	The QCQP in~\eqref{eq:min_qcqp} -- \eqref{eq:probthreshold_qcqp} is nonconvex in general.
\end{theorem}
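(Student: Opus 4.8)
The plan is to establish nonconvexity by exhibiting a single instance of the QCQP in which one of the constraint matrices $P_{s,\act}$ fails to be positive semidefinite. This suffices because of the property recorded just before the theorem: a QCQP is convex precisely when every $P_i$ is PSD, and if any $P_i$ is not PSD the problem is nonconvex. Since the claim is only ``in general'', I do not need to argue about every instance — one witness is enough.

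Concretely, I would reuse the QCQP from Example~\ref{ex:qcqp_transformation}, whose constraint on $p_{s_0}$ carries the matrix
\[
  P_{s_0} = \begin{bmatrix} 0 & 0.5 \\ 0.5 & 0 \end{bmatrix}.
\]
First I would verify that $P_{s_0}$ is not PSD, using the definition from Section~\ref{sec:preliminaries}: evaluating $\vec{x}^T P_{s_0}\vec{x}$ at $\vec{x} = (1,-1)^T$ gives $-1 < 0$, so the defining inequality $\vec{x}^TP_{s_0}\vec{x}\ge 0$ for all $\vec{x}\in\R^2$ fails. Equivalently, the characteristic polynomial $\lambda^2 - 0.25$ has the negative root $\lambda = -0.5$, so $P_{s_0}$ has a negative eigenvalue. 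Invoking the stated property then yields that this QCQP instance is nonconvex, which proves the theorem.

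I would additionally point out that this is not an artifact of the chosen example but the rule. By the construction of $P_{s,\act}$, the only quadratic terms are products $v\cdot p_{s'}$ of a parameter with a probability variable, never squares $v^2$ or $p_{s'}^2$. Hence every $P_{s,\act}$ is hollow, i.e.\@ zero on the diagonal, so $\operatorname{trace}(P_{s,\act}) = 0$. Whenever a transition is genuinely parametric, $P_{s,\act}\ne 0$ has a nonzero eigenvalue, and since the eigenvalues sum to the trace $0$, at least one of them must be negative. Thus $P_{s,\act}$ is never PSD for a parametric transition, giving a robust reason the problem is nonconvex in general rather than an accident of Example~\ref{ex:qcqp_transformation}.

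There is essentially no hard computational step here; the only subtlety is conceptual. I must argue nonconvexity at the level of the constraint \emph{function} $\vec{x}^TP_{s_0}\vec{x}$ over all of $\R^n$, as the paper's PSD definition demands, and take care not to conflate this with convexity of the feasible region. I also want to keep in mind that a single witnessing instance is all the phrase ``in general'' requires, so no universally quantified statement over all pMDPs is needed.
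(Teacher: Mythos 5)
Your proof is correct and follows essentially the same route as the paper: it exhibits the constraint matrices from Example~\ref{ex:qcqp_transformation}, observes that $P_{s_0}$ has the negative eigenvalue $-0.5$ (equivalently, via your explicit witness $\vec{x}=(1,-1)^T$ with $\vec{x}^TP_{s_0}\vec{x}=-1<0$), and invokes the stated property that a QCQP with a non-PSD constraint matrix is nonconvex. Your supplementary trace argument --- every $P_{s,\act}$ is hollow by construction, so any nonzero $P_{s,\act}$ has eigenvalues summing to zero and hence a negative one, making it never PSD for a genuinely parametric transition --- is a correct strengthening that the paper does not spell out: it shows the nonconvexity is intrinsic to the construction rather than an accident of the chosen example.
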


\begin{proof}
The matrices $P_{s_0}, P_{s_1}, P_{s_2}$ in Example~\ref{ex:qcqp_transformation} have an eigenvalue of $-0.5$ and are not PSD.
Thus, the constraints and the resulting QCQP are nonconvex. \qed
\end{proof}

%\paragraph{Reformulation of the NLP.}
%We use the standard form of a QCQP as in~\eqref{eq:qcqp_min}-\eqref{eq:qcqp_constraints} to represent constraints that involves quadratic terms. \nj{didn't you also mention efficiency reasons?} \mc{efficiency comes after ccp procedure, also i am just transforming quadratic constraints, and I explain how qcqp can capture affine constraints}
%Recall the NLP~\eqref{eq:min_mdp}--\eqref{eq:probcomputation_mdp}. 

%Like the GP approach, we will relax the equality constraints in \eqref{eq:probcomputation_mdp} to an inequality constraint, but unlike the GP approach, we can relax the constraint depending on the problem, i.e, we are not restricted to the following constraint:
%\begin{align*}
%\forall s\in S\setminus T.	&\quad p_s \geq \sum_{s'\in S}	\probmdp(s,\act,s')\cdot p_{s'}
%\end{align*}

%After this transformation, we can write the NLP~\eqref{eq:min_mdp}-\eqref{eq:probcomputation_mdp} as a QCQP:

		%\mc{shall we state qcqp first, then explain what's going on like we did in NLP?}
%If the matrices $P_s$ are positive semi-definite, then the above optimization problem is a convex optimization problem, and we would be able to solve it efficiently. However, it is rarely the case that all of the $P_s$ matrices would be positive semi-definite, which will make the above problem a NP-hard optimization problem. 

\section{Efficient pMDP Synthesis via Convexification}
\label{sec:ccp}
The QCQP in \eqref{eq:min_qcqp} -- \eqref{eq:probthreshold_qcqp} is nonconvex and hard to solve in general.
We provide a solution by employing a heuristic called the \emph{convex-concave procedure} (CCP)~\cite{lipp2016variations}, which relies on the ability to efficiently solve convex optimization problems.
%\nj{I removed a subsection here. Check if that's ok.}
%We discuss the specific methodology in general as well as efficiency improvements over the naive approach. 

%\subsection{Using the Convex-Concave Procedure for pMDP Synthesis}
%\label{subsec:ccpprocedure}
The CCP computes a \emph{local optimum} of a non-convex \emph{difference-of-convex} (DC) problem. 
A DC problem has the form
	\begin{align}
			\text{minimize } &\quad f_0(\vec{x})-g_0(\vec{x})\\
			\text{subject to}& \nonumber \\
					\forall i=1,\ldots,m.		 &\quad f_i(\vec{x})-g_i(\vec{x})\leq 0,\label{eq:ccp_cons}
		\end{align}
where for $i=0,1,\ldots,m$, $f_i(\vec{x})\colon \R^n \rightarrow \R$ and $g_i(\vec{x})\colon \R^n \rightarrow \R$ are convex. 
The functions $-g_i(\vec{x})$ are \emph{concave}.
Every quadratic function can be rewritten as a DC function.
Consider the indefinite quadratic function $\vec{x}^{T}P_{s,\act} \vec{x}+q_{s,\act}^T\vec{x}$ from~\eqref{eq:probcomputation_qcqp}.
We decompose the matrix $P_{s,\act}$ into the difference of two matrices
\begin{align*}
	P_{s,\act} = P^+_{s,\act} - P^-_{s,\act}\text{ with } P^+_{s,\act}=P_{s,\act}+tI \text{ and } P^-_{s,\act}=tI, 
\end{align*}
where $I$ is the identity matrix, and $t \in \R_{+}$ is sufficiently large to render $P^+_{s,\act}$ PSD, e.g., larger than the largest eigenvalue of $P_{s,\act}$. 
Then, we rewrite $\vec{x}^{T}P_{s,\act} \vec{x}+q_{s,\act}^T\vec{x}$ as $\left(\vec{x}^{T}P^+_{s,\act}\vec{x}+q_{s,\act}^T\vec{x}\right) - \vec{x}^{T}P^-_{s,\act} \vec{x}$, which is in the form of~\eqref{eq:ccp_cons}.
\begin{example}
\label{ex:dc}
Recall the pMC in Fig.~\ref{fig:pmc_reform} and the QCQP from Example~\ref{ex:qcqp_transformation}.  All matrices $P_s$ of the QCQP are not PSD. We construct a DC problem with $t=1$ for all $P_s$:

\begin{align*}
			\text{minimize } &\quad p_{s_0} \\
			\text{subject to} & \nonumber\\
			&\quad p_{s_3}=1\\
			&\quad \lambda \geq p_{s_0} \geq \begin{bmatrix}
    v \\
    p_{s_1}
\end{bmatrix}^T \begin{bmatrix}
    1       & 0.5 \\
    0.5       & 1
\end{bmatrix} \begin{bmatrix}
    v \\
    p_{s_1}
\end{bmatrix} - \begin{bmatrix}
    v \\
    p_{s_1}
\end{bmatrix}^T \begin{bmatrix}
    1       & 0 \\
    0       & 1
\end{bmatrix} \begin{bmatrix}
    v \\
    p_{s_1}
\end{bmatrix} \\
&\quad p_{s_1} \geq \begin{bmatrix}
    v \\
    p_{s_2}
\end{bmatrix}^T \begin{bmatrix}
    1       & -0.5 \\
    -0.5       & 1
\end{bmatrix} \begin{bmatrix}
    v \\
    p_{s_2}
\end{bmatrix} - \begin{bmatrix}
    v \\
    p_{s_2}
\end{bmatrix}^T \begin{bmatrix}
    1       & 0 \\
    0       & 1
\end{bmatrix} \begin{bmatrix}
    v \\
    p_{s_2}
\end{bmatrix} +p_{s_2}\\
&\quad p_{s_2} \geq  \begin{bmatrix}
    v \\
    p_{s_3}
\end{bmatrix}^T \begin{bmatrix}
    1       & 0.5 \\
    0.5       & 1
\end{bmatrix} \begin{bmatrix}
    v \\
    p_{s_3}
\end{bmatrix} - \begin{bmatrix}
    v \\
    p_{s_3}
\end{bmatrix}^T \begin{bmatrix}
    1       & 0 \\
    0       & 1
\end{bmatrix} \begin{bmatrix}
    v \\
    p_{s_3}
\end{bmatrix} \\
			&\quad 1-\epsgraph  \geq v \geq \epsgraph.
		\end{align*}
We have $\vec{x}=(p_{s_1},p_{s_2},p_{s_3},v)$ and an initial assignment $\hat x=(\hat p_{s_1},\hat p_{s_2},\hat p_{s_3},\hat v)$. 
\end{example}

\paragraph{CCP approach.}
For the resulting DC problem, we consider the iterative \emph{penalty CCP method}~\cite{lipp2016variations}.
The procedure is initialized with any initial assignment $\hat{x}$ of the variables $\vec{x}$.
In the \emph{convexification} stage, we compute affine approximations in form of a linearization of $g_{i}(\vec{x})$ around $\hat{x}$:
	\begin{align*}
	\bar{g}_i(\vec{x})=g_i(\hat{x})+\nabla g_i(\hat{x})^{T}(\vec{x}-\hat{x}),
		\end{align*}	
where $\nabla g_i$ is the gradient of the functions $g_i(\vec{x})$ at $\hat{x}$. 
Then, we replace the DC function $f_i(\vec{x})-g_i(\vec{x})$ by  $f_i(\vec{x})-\bar{g}_i(\vec{x})$, which is a \emph{convex over-approximation} of the original function.
A feasible assignment for the resulting over-approximated and \emph{convex} DC problem is also feasible for the original DC problem.
 
To find such a feasible assignment, a \emph{penalty variable} $k_{s,\act}$ for all $s\in S \setminus T$ and $\act\in\Act$ is added to all convexified constraints.
Solving the resulting problem then seeks to minimize the violation of the original DC constraints by minimizing the sum of the penalty variables.
%If this sum is minimized to $0$, we have a feasible solution to the convexified problem. 
The resulting convex problem is written as
\begin{align}
			\text{minimize} &\quad p_{\sinit}+ \tau\sum_{\forall s\in S\setminus T}\sum_{\forall \act \in \Act}k_{s,\act}\label{eq:min_qcqp_ccp}\\
			\text{subject to}\nonumber\\
%							 &\quad p_{\sinit}\leq \lambda\label{eq:strategyah:lambda}\\
			\forall s\in T.	 &\quad p_s=1\label{eq:targetprob_ccp}\\
			\forall s,s'\in S. \forall\act\in\ActS(s).	 &\quad \probmdp(s,\act,s')\geq \epsgraph \label{eq:well-defined_probs_ccp}\\
			\forall s\in S.\forall \act \in \ActS(s).	 &\quad \sum_{s'\in S}\probmdp(s,\act,s')=1\label{eq:well-defined_probs_ccp1}\\
								 &\quad \lambda \geq p_{\sinit}
								 \label{eq:probthreshold_ccp}\\
								 			\forall s\in S\setminus T. \forall \act \in \ActS(s)	&\quad k_{s,\act} +p_s \geq \vec{x}^{T}P^+_{s,\act} \vec{x}+q_{s,\act}^T\vec{x}-\hat{x}^{T}P^-_{s,\act}(2\vec{x}-\hat{x})\label{eq:probcomputation_ccp}\\
			\forall s\in S\setminus T.	\forall \act \in \ActS(s) &\quad k_{s,\act} \geq 0, \label{eq:slackvariable_ccp}
		\end{align}
where $\tau > 0$ is a fixed \emph{penalty parameter}, and the gradient of $\vec{x}^TP^-_{s,\act}\vec{x}$ is $2 \cdot P^-_{s,\act}\hat{x}$.  
This convexified DC problem is in fact a convex QCQP. 
The changed objective now makes the constraint \eqref{eq:probthreshold_ccp} important.

\begin{example}
Recall the pMC in Fig.~\ref{fig:pmc_reform} and the DC problem from Example~\ref{ex:dc}. 
We introduce the penalty variables $k_{s_i}$ and assume a fixed $\tau$.
We linearize around $\hat x$. The resulting convex problem is:

 \begin{align*}
		&	\text{minimize } \quad p_{s_0}+\tau\sum_{i=0}^2 k_{s_i} \\
		&	\text{subject to}  \nonumber\\
			&\quad p_{s_3}=1\\
			&\quad \lambda \geq p_{s_0} \\
			&\quad  k_{s_0}+p_{s_0} \geq \begin{bmatrix}
    v \\
    p_{s_1}
\end{bmatrix}^T \begin{bmatrix}
    1       & 0.5 \\
    0.5       & 1
\end{bmatrix} \begin{bmatrix}
    v \\
    p_{s_1}
\end{bmatrix} - \begin{bmatrix}
    \hat{v} \\
    \hat{p}_{s_1}
\end{bmatrix}^T \begin{bmatrix}
    1       & 0 \\
    0       & 1
\end{bmatrix} \begin{bmatrix}
    2\cdot v-\hat{v} \\
    2\cdot p_{s_1}-\hat{p}_{s_1}
\end{bmatrix} \\
&\quad  k_{s_1}+p_{s_1} \geq \begin{bmatrix}
    v \\
    p_{s_1}
\end{bmatrix}^T \begin{bmatrix}
    1       & -0.5 \\
    -0.5       & 1
\end{bmatrix} \begin{bmatrix}
    v \\
    p_{s_2}
\end{bmatrix} - \begin{bmatrix}
    \hat{v} \\
    \hat{p}_{s_2}
\end{bmatrix}^T \begin{bmatrix}
    1       & 0 \\
    0       & 1
\end{bmatrix} \begin{bmatrix}
    2\cdot v-\hat{v} \\
    2\cdot p_{s_2}-\hat{p}_{s_2}
\end{bmatrix} +p_{s_2} \\
&\quad  k_{s_2}+p_{s_2} \geq \begin{bmatrix}
    v \\
    p_{s_3}
\end{bmatrix}^T \begin{bmatrix}
    1       & 0.5 \\
    0.5       & 1
\end{bmatrix} \begin{bmatrix}
    v \\
    p_{s_3}
\end{bmatrix} - \begin{bmatrix}
    \hat{v} \\
    \hat{p}_{s_3}
\end{bmatrix}^T \begin{bmatrix}
    1       & 0 \\
    0       & 1
\end{bmatrix} \begin{bmatrix}
   2\cdot v- \hat{v} \\
   2 \cdot p_{s_3} - \hat{p}_{s_3}
\end{bmatrix}  \\
			&\quad  1-\epsgraph \geq v\geq \epsgraph \\
& \quad k_{s_0}\geq 0, k_{s_1}\geq 0, k_{s_2} \geq 0. 
		\end{align*}
\end{example}
If all penalty variables are assigned to zero, we can terminate the algorithm immediately, for the proof \iftoggle{TR}{see Appendix~\ref{sec:Proof_thm}.}{cf.~\cite{TR}.}
\begin{theorem}
	\label{thm:main}
	A satisfying assignment of the convex DC problem in~\eqref{eq:min_qcqp_ccp} -- \eqref{eq:slackvariable_ccp}
	\begin{align*}
		\text{with}\qquad  \tau\sum_{\forall s\in S\setminus T}\sum_{\forall \act \in \Act}k_{s,\act}=0		
	\end{align*}
 is a feasible solution to Problem 1.
\end{theorem}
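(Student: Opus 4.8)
The plan is to show that forcing all penalty variables to zero collapses the convexified constraint~\eqref{eq:probcomputation_ccp} back into the original QCQP constraint~\eqref{eq:probcomputation_qcqp}, so that the assignment is feasible for the QCQP in~\eqref{eq:min_qcqp} -- \eqref{eq:probthreshold_qcqp}, and then to invoke the Proposition to transfer feasibility to Problem~\ref{prob:pmdpsyn}. First I would note that $\tau\sum_{s,\act}k_{s,\act}=0$ together with~\eqref{eq:slackvariable_ccp} forces $k_{s,\act}=0$ for every $s\in S\setminus T$ and $\act\in\ActS(s)$, so~\eqref{eq:probcomputation_ccp} reads $p_s \geq \vec{x}^{T}P^+_{s,\act}\vec{x}+q_{s,\act}^T\vec{x}-\hat{x}^{T}P^-_{s,\act}(2\vec{x}-\hat{x})$. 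The subtracted term is exactly the linearization $\bar{g}_{s,\act}(\vec{x})$ of the convex function $g_{s,\act}(\vec{x})=\vec{x}^{T}P^-_{s,\act}\vec{x}$ around $\hat{x}$, since $\nabla g_{s,\act}(\hat{x})=2P^-_{s,\act}\hat{x}$ gives $\bar g_{s,\act}(\vec x)=\hat x^T P^-_{s,\act}\hat x+2\hat x^T P^-_{s,\act}(\vec x-\hat x)=\hat{x}^{T}P^-_{s,\act}(2\vec{x}-\hat{x})$.

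The key step is the first-order characterization of convexity: because $P^-_{s,\act}=tI$ is PSD, $g_{s,\act}$ is convex, hence its linearization is a global under-estimator, i.e.\ $g_{s,\act}(\vec{x})\geq \bar{g}_{s,\act}(\vec{x})$ for all $\vec{x}$. Consequently $\vec{x}^{T}P^+_{s,\act}\vec{x}+q_{s,\act}^T\vec{x}-\bar g_{s,\act}(\vec x)\geq \vec{x}^{T}P^+_{s,\act}\vec{x}+q_{s,\act}^T\vec{x}-\vec{x}^{T}P^-_{s,\act}\vec{x}=\vec{x}^{T}P_{s,\act}\vec{x}+q_{s,\act}^T\vec{x}$, using $P_{s,\act}=P^+_{s,\act}-P^-_{s,\act}$. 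Chaining this lower bound with the zero-penalty constraint yields $p_s\geq \vec{x}^{T}P_{s,\act}\vec{x}+q_{s,\act}^T\vec{x}$, which is precisely~\eqref{eq:probcomputation_qcqp}. Since the remaining constraints~\eqref{eq:targetprob_ccp} -- \eqref{eq:probthreshold_ccp} coincide with~\eqref{eq:targetprob_qcqp} -- \eqref{eq:probthreshold_qcqp}, the assignment satisfies every constraint of the QCQP.

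Finally I would read off the instantiation $u$ from the parameter entries of $\vec{x}$. Constraints~\eqref{eq:well-defined_probs_ccp} and~\eqref{eq:well-defined_probs_ccp1} guarantee $\probmdp(s,\act,s')[u]\geq\epsgraph>0$ and that the outgoing probabilities sum to one, so $u$ is well-defined and graph-preserving. By the Proposition the feasible probability variables $p_s$ upper-bound the probability of reaching $T$ under every strategy in $\mdp[u]$; combined with~\eqref{eq:probthreshold_ccp} this forces the maximal reachability probability to be at most $\lambda$, i.e.\ $\mdp[u]\models\reachPropSymbol$, which is exactly feasibility for Problem~\ref{prob:pmdpsyn}. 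The main obstacle is the convex under-estimation step: one must recognize the subtracted term as the tangent of a PSD quadratic and apply the first-order convexity inequality to argue that feasibility of the convexified (over-approximated) problem implies feasibility of the original nonconvex QCQP; the rest is bookkeeping that matches constraints and invokes the Proposition.
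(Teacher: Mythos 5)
The first half of your argument is correct and is in fact \emph{more} explicit than the paper's own proof: the paper dismisses the convexification step in a single sentence, whereas you correctly identify the subtracted term in \eqref{eq:probcomputation_ccp} as the tangent $\bar g_{s,\act}$ of the convex quadratic $\vec{x}^T P^-_{s,\act}\vec{x}$ at $\hat x$, apply the first-order convexity inequality $g_{s,\act}(\vec x)\geq\bar g_{s,\act}(\vec x)$, and conclude that zero penalties collapse \eqref{eq:probcomputation_ccp} into \eqref{eq:probcomputation_qcqp}; together with the coincidence of the remaining constraints this yields feasibility for the QCQP and hence for the NLP. That part stands.

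The gap is in your final step. The Proposition is stated in the paper \emph{without proof}, and its literal content concerns what the NLP computes \emph{at its optimum} (the minimal probability of reaching $T$ under a maximizing strategy). It does not assert the property you actually need: that for an \emph{arbitrary feasible} (not necessarily optimal) assignment, the values $p_s$ dominate the reachability probabilities $\pr_s(\mdp[u],\finally T)$ of the instantiated MDP under every strategy. Establishing exactly this domination claim is what the paper's proof of the theorem consists of, almost in its entirety: it sets $q_s$ to the (maximal) reachability probability in $\mdp[u]$, defines $x_s=q_s-p_s$ and $S_<=\{s\mid p_s<q_s\}$, assumes $S_<\neq\emptyset$, picks a state maximizing $x_s$, combines the feasibility constraint \eqref{eq:probcomputation_mdp} with the Bellman equation for $q$ to show that the set of maximizing states is closed under successors, and derives a contradiction with the fact that $T$ is reachable with positive probability from every state of $S_<$. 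By citing the Proposition you are assuming precisely what has to be proved. The fact itself is standard --- any pre-fixed point of the max-reachability Bellman operator dominates its least fixed point, which equals the value --- so your proof could be repaired either by reproducing the paper's maximum-difference argument or by an explicit appeal to that least-fixed-point characterization (e.g., \cite[Ch.~10]{BK08}); but some such argument must be supplied, since the Proposition as stated cannot carry this weight.
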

%
%In case solving \eqref{eq:min_qcqp_ccp} -- \eqref{eq:slackvariable_ccp} yields $\tau\sum_{\forall s\in S\setminus T}\sum_{\forall \act \in \Act}k_{s,\act}=0$, the penalty is $0$ and we have a feasible solution to the original DC problem.
If any of the penalty variables are assigned to a positive value, we update the penalty parameter $\tau$ by $\mu + \tau$ for a $\mu > 0$, until an upper limit for $\tau_{\mathit{max}}$ is reached to avoid numerical problems.
Then, we compute a linearization of the $g_i$ functions around the current (not feasible) solution and solve the resulting problem. 
This procedure is repeated until we find a feasible solution. 
If the procedure converges to an infeasible solution, it may be restarted with an adapted initial $\hat{x}$.

%\subsection{Adding constraints for cost specification}
%The QCQP in~\eqref{eq:min_qcqp_ccp}--\eqref{eq:slackvariable_ccp} only considers the reachability specification. To verify expected cost properties, we modify the objective to \quad $c_{\sinit}+ \tau\sum_{\forall s\in S\setminus T}\sum_{\forall a \in \Act}k_{s,\act}$ replace the constraints~\eqref{eq:targetprob_ccp} and \eqref{eq:probthreshold_ccp}--\eqref{eq:probcomputation_ccp} by  following constraints:
%	\begin{align}
%	\forall s\in G.	 &\quad c_s=0\label{eq:targetrew}\\
%	\forall s\in S\setminus G.\forall \act \in \Act	&\quad k_{s,\act}+c_s\geq  c(s,\act) + \vec{x}^{T}P^+_{s,\act} \vec{x}+q_{s,\act}^T\vec{x}-\hat{x}^{T}P^-_{s,\act}(2\vec{x}-\hat{x})
%	 \label{eq:rewcomputation}\\
%	&\quad c_{\sinit}\leq \kappa\label{eq:strategyah:lambda}.
%\end{align}

\subsection*{Efficiency Improvements in the Convex-Concave Procedure}\label{sec:efficiencyimprovements}

%\sj{I think it is okay without the number.}
\paragraph{Better convexification.}
We can use the previous transformation 
%in Section~\ref{subsec:ccpprocedure} 
 to perform CCP, but it involves expensive matrix operations, including computing the numerous eigenvalues.
Observe that 
%the successor states of a state in the pMDP are only a fraction of the overall state space, \ie, 
the matrices $P_{s,\act}$ and vectors $q_{s,\act}$ are sparse. 
Then, the eigenvalue method introduces more occurrences of the  variables in every constraint,
 and thereby increases the approximation error during convexification.

We use an alternative convexification:
Consider the bilinear function $h = 2c\cdot yz$, where $y$ and $z$ are variables, and $c \in \R_+$.
We rewrite $h$ equivalently to $h + c(y^2+z^2) - c(y^2+z^2)$.
Then, we rewrite $h + c(y^2+z^2)$ as $c(y+z)^2$.
We obtain $h = c(y+z)^2 -c(y^2+z^2)$.
The function $c(y+z)^2$ is a quadratic convex function, and we convexify the function $-c(y^2+z^2)$  as $-c(\hat{y}^2+\hat{z}^2)+2c(\hat{y}^2+\hat{z}^2-y \hat{y} - z \hat{z})$, where $\hat{y}$ and $\hat{z}$ are the assignments as before.
% in Sect.~\ref{subsec:ccpprocedure}. 
We convexify the bilinear function  $h = 2c\cdot yz$ with $c \in \R_-$ analogously.
% by adding and subtracting the concave function $-c(y^2+z^2)$ to rewrite $2cyz$ as $-c(y-z)^2+c(y^2+z^2)$. 
%Then, again analogously, the concave function $c(y^2+z^2)$, can be convexified as $c(\hat{y}^2+\hat{z}^2)-2c(\hat{y}^2+\hat{z}^2-y \hat{y} - z \hat{z})$.
Consequently, we reduce the occurrences of variables for sparse matrices compared to the eigenvalue method.

\paragraph{Integrating model checking with CCP.}
In each iteration of the CCP, we obtain values $\hat{v}$ which give rise to a parameter instantiation.
Model checking at these instantiations is a good heuristic to allow for \emph{early termination}. 
We check whether the values $\hat{v}$ already induce a feasible solution to the original NLP, even though the penalty variables have not converged to zero.

Additionally, instead of instantiating the initial probability values $\hat{p}_s$ in iteration $i+1$, we may use the model checking result of the MDP instantiated at $\hat{v}$ from iteration $i$. 
%We utilize the model checking results to reach the target from each state $s$ as $\hat{p}_s$ in iteration $i+1$.
Model checking ensures that the probability variables are consistent with the parameter variables, i.e., that the constraints describing the transition relation in the original NLP are all met.
Using the model checking results overcomes problems with local optima.
Small violations in \eqref{eq:probcomputation_ccp}, \ie, small $k_{s,\act}$ values can lead to big differences in the actual probability valuations.
 Then, the CCP may be trapped in poor local optima, where the sum of constraint violations is small, but the violation for the probability threshold is too large.

\paragraph{Algorithmic improvements.} We list three key improvements that we make as opposed to a naive implementation of the approaches.
%In order to compete with sampling based approaches, we improved the implementation:
\begin{inparaenum}[(1)]
\item 	
We efficiently precompute the states $s \in S$ that reach target states with probability $0$ or $1$. Then, we simplify the NLP in~\eqref{eq:min_mdp} -- \eqref{eq:probcomputation_mdp} accordingly.
\item 
Often, all instantiations with admissible parameter values yield well-defined MDPs. We verify this property via an easy preprocessing.
Then, we omit the constraints for the well-definedness.
%For affine MDPs, checking whether an MDP and the parameters possess this property is an easy preprocessing step.
\item 
Parts of the encoding are untouched over multiple CCP iterations.
% specifically the constraints with affine functions.
Instead of rebuilding the encoding, we only update constraints which contain iteration-dependent values. 
The update is based on a preprocessed representation of the model.
The improvement is two-fold: We spend less time constructing the encoding, and the solver reuses previous results, making solving up to three times faster.
\end{inparaenum}

%	 			\forall s\in S\setminus T.	&\quad c_s=\sum_{\act\in\Act}\sigma^{s,\act}\cdot\left( c(s,a)+\sum_{s'\in S}	\probmdp(s,\act,s')\cdot c_{s'}\right)\label{eq:rewcomputation_rand}\\

%\input{applications}

\section{Experiments}\label{sec:experiments}
\subsection{Implementation}
We implement the CCP with the discussed efficiency improvements from Sect.~\ref{sec:efficiencyimprovements} in the parameter synthesis framework \prophesy~\cite{dehnert-et-al-cav-2015}.
We use the probabilistic model checker \storm~\cite{DBLP:conf/cav/DehnertJK017} to extract an explicit representation of  an pMDP.
% given in any of the input formats that \storm{} can handle.
%To prevent run-time penalties from the repetitive model checking calls, 
We keep the pMDP in memory, and update the parameter instantiations using an efficient data structure to enable efficient repetitive model checking.
To solve convex QCQPs, we use Gurobi 7.5~\cite{gurobi}, configured for numerical stability.
%\nj{Is it important that we use the Python bindings of Storm and Gurobi?}

\paragraph{Tuning constants.}%\nj{removed 'magic', ok?}
Optimally, we would initialize the CCP procedure, \ie, $\hat{v}$ (for the parameters) and $\hat{p}_s$ (for the probability variables), with a feasible point, but that would require to already solve Problem~\ref{prob:pmdpsyn}. %\nj{changed wording to better understandability}
Instead, we instantiate $\hat{v}$ as the center of the parameter space, and thereby minimize the worst-case distance to a feasible solution.
For $\hat{p}_s$, we use the threshold $\lambda$ from the specification $\reachProplT$ to initialize the probability variables, and analogously for expected cost.
We initialize the penalty parameter $\tau=0.05$ for reachability, and $\tau=5$ for expected cost, a conservative number in the same order of magnitude as the values $\hat{p}_s$.
As expected cost evaluations have wider ranges than probability evaluations, a larger $\tau$ is sensible.
%, as the values $\hat{p_s}$ have a wider range than probabilities.
We pick $\mu = \max_{s\in S\setminus T }\hat{p}_s$.
We update $\tau$ by adding $\mu$ after each iteration. 
Empirically, increasing $\tau$ with bigger steps is beneficial for the run time, but induces more numerical instability.
In contrast, in the literature, the update parameter $\mu$ is frequently used as a constant, \ie , it is not updated between the iterations. 
In, e.g,~\cite{lipp2016variations}, $\tau$ is multiplied by $\mu$ after each iteration.

\subsection{Evaluation}
\paragraph{Set-up.}
We evaluate on a HP BL685C G7 with 48 2 GHz cores, a 32 GB memory limit, and 1800 seconds time limit; the implementation only using a single thread.
The task is to find feasible parameter valuation for pMCs and pMDPs with non-trivial upper/lower thresholds on probabilities/costs in the specifications, as in Problem~\ref{prob:pmdpsyn}. %\nj{bounds = thresholds?}
We ask for a well-defined valuation of the parameters, with $\epsgraph=10^{-5}$.
%\nj{that is the $\epsgraph$, right? Why not use it here?}
We run all the approaches with the exact same configuration of \storm. For pMCs, we enable weak bisimulation, which is beneficial for all presented examples. 
We do not use bisimulation for pMDPs.

We compare runtimes with a particle-swarm optimization (PSO) and two SMT-based approaches.
PSO is a heuristic sampling approach which searches the parameter space, inspired by~\cite{chen2013model}.
For each valuation, PSO performs model checking \emph{without} rebuilding the model, rather it adapts the matrix from previous valuations.
As PSO is a randomized procedure, we run it with random seeds 0--19. 
The PSO implementation requires the well-defined parameter regions to constitute a hyper-rectangle, as proper sampling from polygons is a non-trivial task.
The first SMT approach directly solves the NLP \eqref{eq:targetprob_mdp} -- \eqref{eq:well-defined_eps} using the SMT solver Z3~\cite{demoura_nlsat}.
The second SMT approach preprocesses the NLP using state elimination~\cite{Daw04} as implemented in, e.g., \param, \prism and \storm. 
%\nj{So we do actually use param and prism?}

We additionally compare against a prototype of the geometric programming (GP) approach~\cite{DBLP:conf/tacas/Cubuktepe0JKPPT17} based on CvxPy~\cite{cvxpy} and the solver SCS~\cite{scs}, and the QCQP package~\cite{park2017general}, which implements several heuristics, including a naive CCP approach, for nonconvex QCQPs. 
Due to numerical instabilities, we could not automatically apply these two approaches to a wide range of benchmarks.

\paragraph{Benchmarks.}%\mc{also replace Maze with maze?}
We include the standard pMC benchmarks from the \param website, which contain two parameters.
We furthermore have a rich selection of strategy synthesis problems obtained from partially observable MDPs (POMDPs), cf.~\cite{DBLP:journals/corr/abs-1710-10294}:
GridX are gridworld problems with trap states (A), finite horizons  (B), or movement costs (C). 
Maze is a navigation problem.
Network and Repudiation originate from distributed protocols.
We obtain the pMDP benchmarks either from the \param website, or as parametric variants to existing \prism case studies, and describe randomized distributed protocols.
%\mc{randomized, distributed or distributed?}

\paragraph{Results.}%\mc{moved pmdp bisimulation stuff to the setup paragraph}\nj{good}
Table~\ref{tab:results_pmc} contains an overview of the results.
The first two columns refer to the benchmark instance, the next column to the specification evaluated. 
We give the states (States), transitions (Trans.) and parameters (Par.) \emph{in the bisimulation quotient}, which is then used for further evaluation.
We then give the \emph{minimum} (tmin), the \emph{maximum} (tmax) and \emph{average} (\textbf{tavg}) runtime (in seconds) for PSO with different seeds, the best runtime obtained using SMT (\textbf{t}), and the runtime for CCP (\textbf{t}). 
For CCP, we additionally give the fraction (in percent) of time spent in Gurobi (solv), and the number of CCP iterations (iter).
Table~\ref{tab:results_pmdp} additionally contains the number of actions (Act) for pMDPs.
The boldfaced measures \textbf{tavg}, and \textbf{t} for both SMT and CCP are the important measures to compare. 
Boldface values are the ones with the best performance for a specific benchmark.

\setlength{\tabcolsep}{2pt} % Default value: 6pt

\begin{table}[t]
\centering
\footnotesize{
\caption{pMC benchmark results}
\label{tab:results_pmc}
\scalebox{0.9}{
\begin{tabular}{lll|rrr|rrr|r|rrr}
\multicolumn{3}{c|}{Problem} & \multicolumn{3}{c|}{Info} & \multicolumn{3}{c|}{PSO} & SMT & \multicolumn{3}{c}{CCP} \\
Set    & Inst    & Spec   & States   & Trans.  & Par.      & tmin   & tmax   & \textbf{tavg}  & \textbf{t}   & \textbf{t}    & solv    & iter    \\\hline
% Crowds
Brp  & 16,2         & $\p_{\leq 0.1}$       & 98    & 194  & \highlight{2}     &  0     & 0       &  \textbf{0}     & 40    & 0     &  30\%  &   3 \\
Brp  & 512,5         & $\p_{\leq 0.1}$       & 6146    & 12290  & \highlight{2}     &  24     & 36       &  \textbf{28}     & \TO    & 33     &  24\%  &   3 \\
% Crowds
Crowds  & 10,5         & $\p_{\leq 0.1}$       & 42    & 82  & \highlight{2}     &  4     & 5       &  5     & 8    & \textbf{4}     &  2\%  &   4 \\
% Nand
Nand  & 5,10         & $\p_{\leq 0.05}$       & 10492    & 20982  & \highlight{2}     &  21     & 51       &  28     & \TO    & \textbf{22}     &  21\%  &   2 \\
% Zeroconf
Zeroconf  & 10000         & $\EV_{\leq 10010}$       & 10003    & 20004  & \highlight{2}     &  2      & 4       &  \textbf{3}     & \TO    & 57     &  81\%  & 3  \\\hline
%4x4grid_avoid_sl_simple_m4.drn
GridA  & 4         & $\p_{\geq 0.84}$       & 1026    & 2098  & \highlight{72}     &   11     &   11     &  \textbf{11}     &  \TO    & 22     &  81\%  & 11  \\
%4x4grid_bounded_8_sl_simple_m5.drn
GridB  & 8,5         & $\p_{\geq 0.84}$       & 8653    & 17369  & \highlight{700}     &    409    &   440     & 427      & \TO    & \textbf{213}     &  84\%  & 8  \\
%4x4grid_bounded_10_sl_simple_m6.drn
GridB  & 10,6         & $\p_{\geq 0.84}$       & 16941    & 33958  & \highlight{1290}     &  533      &   567     &  553     &  \TO   & \textbf{426}     &  84\%  & 7  \\
%grid4x4_simple_extended_m6.drn
GridC  & 6         & $\EV_{\leq 4.8}$       & 1665    & 305  & \highlight{168}     &  261      &  274      &  267     &  \TO   & \textbf{169}     &  90\%  & 23  \\
%maze_simple_extended_m5.drn  
Maze  & 5         & $\EV_{\leq 14}$       & 1303    & 2658  & \highlight{590}     &  213  & 230 & 219         &  \TO   &   \textbf{67}   &  89\%  & 8  \\
%maze_simple_extended_m5.drn  
Maze  & 5         & $\EV_{\leq 6}$       & 1303    & 2658  & \highlight{590}     &     -- &  -- & {\TO}          &  \TO   & \textbf{422}     &  85\%  & 97  \\

%maze_simple_extended_m7.drn  
Maze  & 7         & $\EV_{\leq 6} $       & 2580    & 5233  & \highlight{1176}     &    -- &  -- & {\TO}      & \TO    & \textbf{740}     &  90\%  & 60  \\
% network_T5_simple_extended_m2
Netw  & 5,2         & $\EV_{\leq11.5}$       & 21746    & 63158  & \highlight{2420}     &  312      & 523       &   359    &  \TO   & \textbf{207}     & 39\%   & 3  \\

% network_T5_simple_extended_m2
Netw  & 5,2         & $\EV_{\leq10.5}$       & 21746    & 63158  & \highlight{2420}     &        -- &  -- & {\TO}        &  \TO   & \textbf{210}     & 38\%   & 4  \\
% network_T4_simple_extended_m3
Netw  & 4,3         & $\EV_{\leq11.5}$       & 38055    & 97335  & \highlight{4545}     &        -- &  -- & {\TO}        &  \TO   & \MO     & -   & -  \\

% repudiation_simple_extended_m5.drn
Repud  & 8,5         &   $\p_{\geq 0.1}$      & 1487    & 3002  & \highlight{360}     &   16     & 22       &   18    & \TO    & \textbf{4}     & 36\%   & 2  \\
% repudiation_simple_extended_m5.drn
Repud  & 8,5         & $\p_{\leq 0.05}$        & 1487    & 3002  & \highlight{360}     &    273    &   324     &   293    &  \TO   & \textbf{14}     &  72\%  &  4 \\
%repudiation_16_simple_extended_m2
Repud  & 16,2         &  $\p_{\leq 0.01}$      & 790    & 1606  & \highlight{96}     &       -- &  -- & {\TO}       & \TO    &  \textbf{15}    & 78\%   & 9  \\
Repud  & 16,2         &  $\p_{\geq 0.062}$      & 790    & 1606  & \highlight{96}     &       -- &  -- & {\TO}       & \TO    &  \TO    & -   & -  \\\hline

\end{tabular}
}}
\end{table}
\begin{table}[t]
\centering
\caption{pMDP benchmark results}
\label{tab:results_pmdp}
\footnotesize{
\scalebox{0.9}{
\begin{tabular}{lll|rrrr|rrr|r|rrr}
\multicolumn{3}{c|}{Problem} & \multicolumn{4}{c|}{Info}   & \multicolumn{3}{c|}{PSO} & SMT & \multicolumn{3}{c}{CCP} \\
Set    & Inst    & Spec   & States & Act & Trans. & Par. & tmin   & tmax   & \textbf{tavg}  & \textbf{t}   & \textbf{t}    & solv    & iter    \\\hline

BRP & 4,128 & $\p_{\leq 0.1}$ &17131 & 17396 & 23094 & \highlight{2}  & 45 & 47 & 46 & \TO & \textbf{39}
 & 33\%  & 4 \\  
Coin & 32 &$\EV_{\leq 500}$ & 4112 & 6160 & 7692 & \highlight{2} & 117 & 119 & \textbf{118} & \TO & \TO & - & - \\ 
CoinX & 32 & $\EV_{\leq 210}$ & 16448 & 24640 & 30768 & \highlight{2} & 1196 & 1222 & 1208 & \TO & \textbf{32} & 78\% & 3 \\ 
Zeroconf       & 1        & $\p_{\geq 0.99}$ & 31402      &  55678      &  70643   &    \highlight{3}   & 18    &  19      & \textbf{19}        &  \TO      &     79     &    82\%     &   2 \\
CSMA & 2,4 & $\EV_{\leq 69.3}$ & 7958 & 7988 & 10594 & \highlight{26} & n.s. & n.s. & n.s. & \TO & \textbf{79}  & 86\% & 10  \\
Virus & - & $\EV_{\leq 10}$ & 809 & 3371 & 6741 & \highlight{18} &  113 & 113 & 113 & \TO & \textbf{13} & 76\% & 4 \\
Wlan & 0 & $\EV_{\leq 580}$ & 2954 & 3972  & 5202 & \highlight{15} & n.s. & n.s. & n.s. & \TO & \textbf{7} & 72\% & 2 \\\hline

\end{tabular}
}}
\end{table}

%We excluded benchmarks where QCQP is initialized with a feasible solution (observe that we always require multiple iterations). 
There is a constant overhead for model building, which is in particular large if the bisimulation quotient computation is expensive, see the small fraction of time spent solving CCPs for Crowds.
For the more challenging models, this overhead is negligible. Roughly 80--90\% of the time is spent within \tool{Gurobi} in these models, the remainder is used to feed the CCPs into Gurobi.
A specification threshold closer to the (global) optimum typically induces a higher number of iterations (see Maze or Netw with different threshold).
For the pMDP Coin, optimal parameter values are on the boundary of the parameter space and quickly reached by PSO. The small parameter values together with the rather large expected costs are numerically challenging for CCP. 
 For CoinX, the parameter values are in the interior of the parameter space and harder to hit via sampling. 
 For CCP, the difference between small and large coefficients is smaller than in Coin, which yields better convergence behavior.
 The benchmarks CSMA and WLAN are currently not supported by PSO due to the non-rectangular well-defined parameter space.
 
 CCP does not solve all instances: 
In Netw (4,3), CCP exceeds the memory limit.
In Repud, finding values close the global optimum requires too much time. 
While the thresholds used here are close to the global optima, actually finding the global optimum itself is always challenging.

\paragraph{Effect of integrating model checking for CCP.} 
%A naive QCQP method without model checking mostly yields an inferior performance for our synthesis problem.

The benchmark-set Maze profits most: Discarding the model checking results in our CCP implementation always yields time-outs, even for the rather simple Maze, with threshold 14, which is solved with usage of model checking results within 30 seconds.
Here, using model checking results thus yields a speed-up by a factor of at least 60.
More typical examples are Netw, where discarding the model checking results yields a factor 5 performance penalty.
The Repud examples do not significantly profit from using intermediate model checking results.

\paragraph{Evaluation of the QCQP package, GP and SMT.}%\mc{this paragraph is in past tense, change it to present?}
We evaluate the GP on pMCs with two parameters: 
For the smaller BRP instance, the procedure takes 90 seconds, for Crowds 14 seconds. Other instances yield timeouts. 
We also evaluate the QCQP package on some pMCs. 
For the smaller BRP instance, the package finds a feasible solution after 113 seconds. 
For the Crowds instance, it takes 13 seconds. 
For a Repud instance with 44 states, %110 transitions,
 and 26 parameters, the package takes 54 seconds and returns a solution that violates the specification. CCP with integrated model checking takes less than a second.

The results in Tables~\ref{tab:results_pmc} and~\ref{tab:results_pmdp} make obvious that \emph{SMT is not competitive}, irrespectively whether the NLP is preprocessed via state elimination.
Moreover, state elimination (for pMCs) within the given time limit \emph{is only possible for those (considered) models with 2 parameters}, using either \prism, \param, or \storm.

\subsection{Discussion}

\paragraph{A tuned variant of CCP improves the state-of-the-art.}
Just applying out-of-the-box heuristics for QCQPs---like realized in the QCQP package or using our CCP implementation without integrated model checking---does not yield a scalable method.
To solve the nonconvex QCQP, we require a CCP with a clever encoding, cf.\ Sect.~\ref{sec:efficiencyimprovements}, and several algorithmic improvements.
%In fact, integrating model checking techniques is the key.
State space reductions shrink the encoding, and model checking after each CCP iteration to terminate earlier typically saves 20\% of iterations. 
Especially when convergence is slow, model checking saves significantly more iterations.
Moreover, feeding model checking results into the CCP improves runtime by up to an additional order of magnitude, at negligible costs. 
%Together, a CCP based approach improves the state-of-the-art. 
These combined improvements to the CCP method outnumbers any solver-based approach by orders of magnitude, and is often superior to sampling based approaches, especially in the presence of many parameters.
Benchmarks with many parameters pose two challenges for sampling based approaches: 
Sampling is necessarily sparse due to the high dimension, and
%the effect of parameter values are interdependent.
 optimal parameter valuations for one parameter often depend significantly on other parameter values.
 % of other parameters. \sj{Hard to grasp sentence}  

%\paragraph{QCQP yields superior performance for benchmarks with many parameters.} 
%Feeding the NLP directly into, e.g., an SMT solver is infeasible. 
%The iterative approach of CCP, with fast iterations, is a feasible alternative.
%QCQP is even faster than state-of-the-art sampling approaches based on PSO, especially in presence of many parameters.
%Benchmarks with many parameters pose two challenges for sampling based approaches: 
%Sampling is necessarily sparse due to the high dimension, and optimal parameter valuations often depend more on parameter valuations of other parameters.  

%
%Finally, model checking It prevents searching for solutions in regions where $\hat{p_s}$ does not reflect the reachability probabilities for the MDP induced at $\hat{v}$.

\paragraph{CCP performance can be boosted with particular benchmarks in mind.}
For most benchmarks, choosing larger values for $\tau$ and $\mu$ improves the performance. 
Furthermore, for particular benchmarks, we can find a better initial value for $\hat{p}_s$ and $\hat{v}$. 
These adaptions, however, are not suitable for a general framework. 
Values used here reflect a more balanced performance over several types of benchmarks.
On the downside, the dependency on the constants means that minor changes in the encoding may have significant, but hard to predict, effect.
For SMT-solvers, additional and superfluous constraints often help steering the solver, but in the context of CCP, it diminishes the performance.

\paragraph{Some benchmarks constitute numerically challenging problems.}
For specification thresholds close to global optima and for some expected cost specifications in general, feasible parameter values may be very small. 
Such extremal parameter values induce CCPs with large differences between the smallest and largest coefficient in the encoding, which are numerically challenging.
The pMDP benchmarks are more susceptible to such numerical issues.

\section{Conclusion and Future Work}\label{sec:conclusion}
We presented a new approach to parameter synthesis for pMDPs. 
%Exploiting properties of the underlying optimization problem, %QCQP 
To solve the underlying nonconvex optimization problem efficiently, we devised a method to efficiently employ a heuristic procedure with integrated model checking. 
%that is integrated with model checking.
The experiments showed that our method significantly improves the state-of-the-art.
%is superior to all other available approaches and methods.

In the future, we will investigate how to automatically handle nonaffine transition functions.
To further improve the performance, we will implement a hybrid approach between PSO and the CCP-based method.

\bibliographystyle{splncs04}
\bibliography{literature}
\iftoggle{TR}{
\newpage\appendix

\section{Proof of Theorem~\ref{thm:main}}
\begin{proof}
\label{sec:Proof_thm}

Since the assignment of the convexified DC problem in~\eqref{eq:min_qcqp_ccp} -- \eqref{eq:slackvariable_ccp} is feasible with 
	\begin{align*}
		\tau\sum_{\forall s\in S\setminus T}\sum_{\forall \act \in \Act}k_{s,\act}=0,		
	\end{align*}
we know that the assignment is feasible for the QCQP in~\eqref{eq:min_qcqp} -- \eqref{eq:probthreshold_qcqp} and, by definition, for the NLP in~\eqref{eq:min_mdp} -- \eqref{eq:well-defined_eps}. We will show that for every $s \in S$ we have $\pr(\mdp[u],\finally T,s)\leq p_s$, for any feasible assignment for the NLP in~\eqref{eq:min_mdp} -- \eqref{eq:well-defined_eps}.

For $s \in S$,  define $q_s = \pr_s(\mdp[u],\finally T)$ (the probability to reach $T$ from $s$ in $\mdp[u]$) and $x_s = q_s - p_s$. Let $S_{<}  = \{s \in S \mid p_s < q_s\}$.

For states $s \in T$ we have, by~\eqref{eq:targetprob_mdp} that $p_s=1=q_s = 1$, meaning that $s \not \in S_<$. For states $s$ with $q_s = 0$, \ie, states from which $T$ is almost surely not reachable, we have trivially $p _s \geq q_s$, also implying $s \not \in S_<$. Therefore,
for every $s \in S_<$, $p_s$ satisfies~\eqref{eq:probcomputation_mdp} and $T$ is reachable with positive probability.

Assume, for the sake of contradiction, that $S_{<} \neq \emptyset$, and let $x_{max}  =  \max\{x_s \mid s \in S\},$ and  $S_{max}  = \{s \in S \mid x_s  = x_{max}\}.$

The assumption that $S_< \neq \emptyset$ implies $x_{max} > 0$. Let $s \in S$ be such that $x_s = x_{max}$. Therefore, $s \in S_<$, and thus for all $\act \in \Act$, we have that 

 \begin{align}
 x_s \geq  \displaystyle \sum_{s'\in S}	\probmdp(s,\act,s')\cdot p_{s'}.
 \end{align}

On the other hand, there exists an $\act \in \Act$ such that we have
 \begin{align}
\displaystyle q_s =\displaystyle \sum_{s'\in S} \probmdp(s,\act,s')\cdot q_{s'}.
 \end{align}

Thus, 

 \begin{align}
\displaystyle q_s-p_s & \leq \displaystyle \sum_{s'\in S} \probmdp(s,\act,s')\cdot (q_{s'}-p_{s'}), \label{eq:proofineq}
 \end{align}
 
which is equivalent to

 \begin{align}
\displaystyle x_s  \leq  \displaystyle \sum_{s'\in S} \probmdp(s,\act,s')\cdot x_{s'}.
 \end{align}
Since for all $\act \in \Act$, and $s' \in S$, we have that $\probmdp(s,\act,s') \geq 0$ and $\displaystyle \sum_{s'\in S}\cdot  \probmdp(s,\act,s') = 1$, using the inequality in~\eqref{eq:proofineq} we establish
 \begin{align}
x_{max} = x_s & \leq  \displaystyle \sum_{s'\in S} \probmdp(s,\act,s')\cdot x_{s'}\\
&\leq \displaystyle \sum_{s'\in S} \probmdp(s,\act,s')\cdot x_{max}\\
&\leq x_{max} \displaystyle \sum_{s'\in S} \probmdp(s,\act,s')= x_{max}.
 \end{align}

Which implies that all the inequalities are equalities, meaning that
 \begin{align}
x_{max} = x_s & =  \displaystyle \sum_{s'\in S} \probmdp(s,\act,s')\cdot x_{s'}\\
&= x_{max} \cdot \displaystyle \sum_{s'\in S} \probmdp(s,\act,s').\label{eq:proofeq}
 \end{align}
 
The equation in~\eqref{eq:proofeq} gives us that $x_{s'} = x_{max} > 0$ for every successor $s'$ of $s$ in $\mdp[u]$. Since $s \in S_{max}$ was chosen arbitrarily, for every state $s \in S_{max}$, all successors of $s$ are also in $S_{max}$. As we established that $S_< \cap T = \emptyset$, it is necessary that $T$ is not reachable with positive probability from any $s \in S_{max}$, which is a contradiction with the fact that from every state in $S_<$, the set $T$ is reachable.\qed

\end{proof}

}

\end{document}